\documentclass{article} 
\usepackage{iclr2026_conference,times}

\usepackage{amsmath,amsfonts,bm}

\def\eqref#1{equation~\ref{#1}}

\def\1{\bm{1}}

\def\eps{{\epsilon}}

\def\vmu{{\bm{\mu}}}

\def\vc{{\bm{c}}}

\def\ve{{\bm{e}}}

\def\vz{{\bm{z}}}

\def\mA{{\bm{A}}}

\DeclareMathAlphabet{\mathsfit}{\encodingdefault}{\sfdefault}{m}{sl}
\SetMathAlphabet{\mathsfit}{bold}{\encodingdefault}{\sfdefault}{bx}{n}

\def\sD{{\mathbb{D}}}

\def\sH{{\mathbb{H}}}

\def\sP{{\mathbb{P}}}

\def\sR{{\mathbb{R}}}

\newcommand{\KL}{D_{\mathrm{KL}}}

\usepackage{hyperref}
\usepackage{url}

\usepackage{kotex}

\usepackage{graphicx}  
\usepackage{subcaption}
\usepackage{multirow}
\usepackage{booktabs}
\usepackage{xspace}
\usepackage{algorithm}
\usepackage{algorithmic}
\usepackage[table]{xcolor}
\usepackage{amsthm}
\usepackage{amssymb}
\usepackage{float}

\theoremstyle{plain}

\newtheorem{lemma}{Lemma}

\theoremstyle{definition}

\definecolor{collapse}{gray}{0.9}
\definecolor{low_reg_collapse}{HTML}{FFB6C1} 
\definecolor{high_reg_collapse}{HTML}{ADD8E6}

\newcommand{\abbr}{SPL\xspace}

\newcommand{\expect}[1]{\underset{#1}{\mathbb{E}}}

\newcommand{\norm}[1]{\left\lVert #1 \right\rVert}

\title{Swap-guided Preference Learning \\for Personalized Reinforcement Learning \\from Human Feedback}

\author{Gihoon Kim$^{1}$ \quad Euntai Kim$^{1,2}$
\vspace{0.3cm}\\
    $^{1}$ Yonsei University \quad
    $^{2}$ Korea Institute of Science and Technology \\
\texttt{\{gihoon, etkim\}@yonsei.ac.kr} \\}
\iclrfinalcopy 
\begin{document}

\maketitle

\begin{abstract}
Reinforcement Learning from Human Feedback (RLHF) is a widely used approach to align large-scale AI systems with human values. However, RLHF typically assumes a single, universal reward, which overlooks diverse preferences and limits personalization. Variational Preference Learning (VPL) seeks to address this by introducing user-specific latent variables. Despite its promise, we found that VPL suffers from posterior collapse. While this phenomenon is well known in VAEs, it has not previously been identified in preference learning frameworks. Under sparse preference data and with overly expressive decoders, VPL may cause latent variables to be ignored, reverting to a single-reward model. To overcome this limitation, we propose Swap-guided Preference Learning (SPL). The key idea is to construct fictitious swap annotators and use the mirroring property of their preferences to guide the encoder. SPL introduces three components: (1) swap-guided base regularization, (2) Preferential Inverse Autoregressive Flow (P-IAF), and (3) adaptive latent conditioning. Experiments show that SPL mitigates collapse, enriches user-specific latents, and improves preference prediction. Our code and data are available at \url{https://github.com/cobang0111/SPL}
\end{abstract}


%

\section{Introduction}
\label{introduction}
Reinforcement learning from human feedback (RLHF) has emerged as a prominent method for aligning large-scale AI systems with human values in various fields, particularly natural language processing~\citep{ouyang2022training}. 
In RLHF, a reward model is first trained on human comparison data, and then a policy is optimized with reinforcement learning.
This approach aligns model behavior more closely with human evaluations, improving performance, accuracy, and fairness across diverse domains~\citep{leike2018, ji2023}.

However, most existing RLHF approaches~\citep{christiano2017, ouyang2022training} are based on the single-reward assumption that all human preferences can be represented by a universal reward function.
This assumption is originated from the Bradley–Terry–Luce (BTL) model~\citep{bradley1952rank}, which is commonly used to model pairwise comparisons and treats preferences as if they were generated from a shared scoring function. 
While mathematically convenient, this single-reward assumption is problematic in practice.
Human preferences are not homogeneous but plural and often diverge across individuals or groups.
Recent studies have shown that collapsing diverse perspectives into a single reward function introduces systematic bias in favor of majority preferences, overlooking groups and reducing fairness~\citep{vinodkumar2021, feffer2023, casper2023}.
Consequently, models trained under this assumption may disadvantage underrepresented populations, even when their preferences are valid and important.

To address this issue, researchers have begun exploring what we refer to as personalized alignment (pluralistic alignment)~\citep{sorensen2024}. 
Instead of forcing all preferences into a single universal reward function, personalized alignment seeks to align different reward functions with different individuals according to their preferences, thereby capturing the heterogeneity of human values.
One leading approach is Variational Preference Learning (VPL)~\citep{poddar2024personalizing}, which encodes user-specific latent variables from preference data and decodes them into corresponding rewards.
This framework allows AI systems to flexibly adapt to diverse users without relying on predefined groupings or rigid categorization.

Despite its promise, we found in our experiments that VPL suffers from practical failure mode: posterior collapse.
This phenomenon is sometimes observed in VAEs~\citep{bowman2016generating, chen2016variational, he2019lagging, lucas2019don, wang2021posterior} but has not previously been identified in preference learning frameworks.
When combined with a strong reward decoder, this posterior collapse can cause the encoder's latent variable to become uninformative and effectively ignored.
In such cases, the latent variable fails to capture user-specific information, and the decoder explains preferences without relying on it.
Training then reduces to an implicit single reward model, ignoring minority preferences and undermining the goal of personalized alignment.

To overcome this, we introduce \emph{Swap-guided Preference Learning} (\abbr), an expressive variational framework for personalized alignment that explicitly leverages the structural properties of preference pair data. 
To the best our knowledge, we are the first to report and address posterior collapse in preference learning.
Our approach improves user-latent encoding and reward decoding through three key innovations:  
(i) \textbf{Swap-guided Base Regularization}, which encouraging latent space shows \emph{mirrored} characteristics under preference swapping;  
(ii) \textbf{Preferential-Inverse Autoregressive Flow}, which disentangles swap-reversal and swap-invariant signals, conditioning a inverse autoregressive flow on them to yield improved latent representations without collapse; and  
(iii) \textbf{Adaptive Latent Conditioning}, which dynamically adjusts the contribution of the latent variable to reward prediction. 
Together, these mechanisms consistently reduce posterior collapse and enable more faithful and pluralistic preference modeling.

\begin{figure*}[t]
  \centering
  \includegraphics[width=\linewidth]{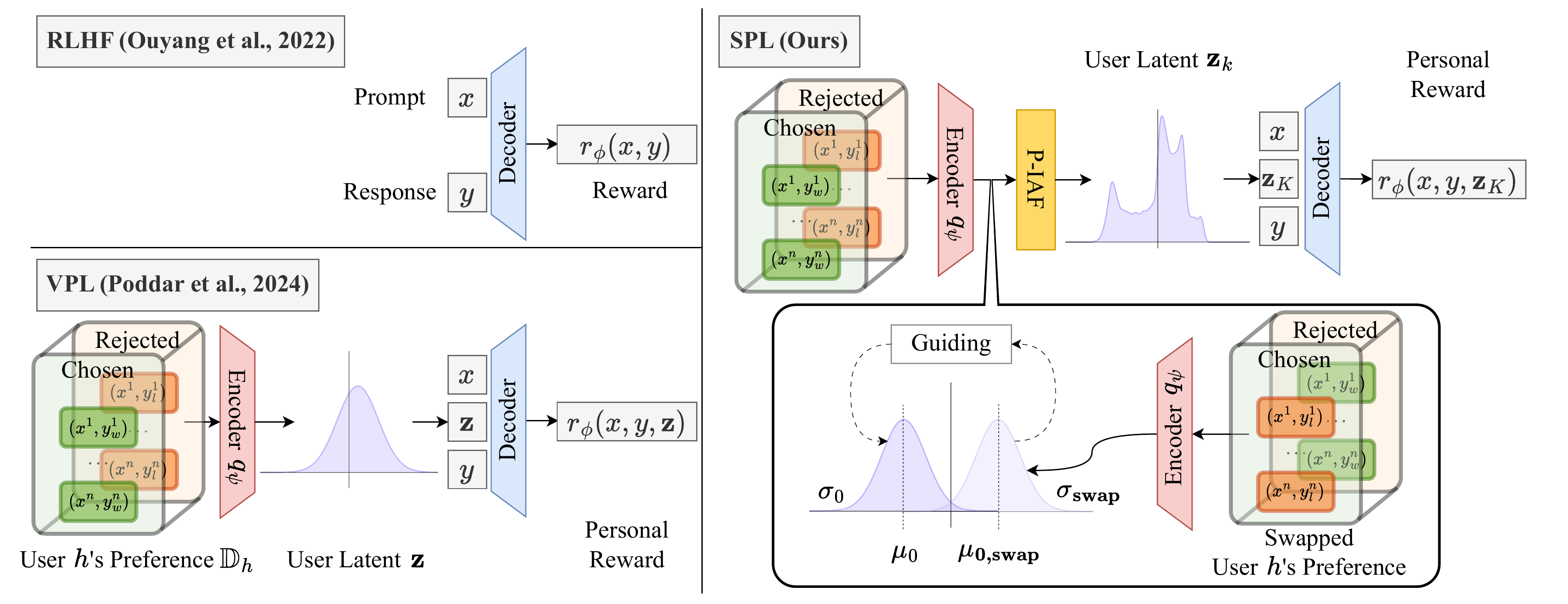}
  \caption{\textbf{Overview of \abbr.} We propose \emph{Swap-guided Preference Learning} (\abbr), a new framework for personalized alignment. 
  RLHF~\citep{ouyang2022training} cannot adequately reflect user diversity.
  To overcome this limitation, VPL~\citep{poddar2024personalizing} encodes text data consisting of a prompt $x$ and response $y$ into a single latent embedding. However, this encoding process is highly prone to collapse. 
  In contrast, \abbr leverages the structural properties of preference data through guiding mechanisms and a Preferential Inverse Autoregressive flow, allowing the latent space to capture user-specific characteristics.}
\label{fig:overview}
\end{figure*}

%
%

\section{Preliminary fundamentals}
\label{prelim}
\paragraph{Reinforcement Learning from Human Feedback}
For post-training of Large Language Models (LLM), RLHF relies on a dataset of $N$ human preference pairs,
$\sD = \{(x^i, y_w^i, y_l^i)\}_{i=1}^N$,
where $x$ is a prompt and $(y_w, y_l)$ denote the chosen(winning) and rejected(losing) responses, respectively.
RLHF assumes an single universal reward function $r_\phi(x,y)$, optimized by maximizing the log-likelihood of observed preferences:
\begin{equation}
    \label{eq:reward_model}
    \mathbb{E}_{(x, y_w, y_l)\sim \sD}
    \Bigl[\log p_\phi(y_w \succ y_l \mid x)\Bigr].
\end{equation}

The preference probability $p_\phi(y_w \succ y_l \mid x)$ is typically modeled via the Bradley--Terry--Luce (BTL) model~\citep{bradley1952rank}:
\begin{equation}
    \label{eq:btl_prob}
    p_\phi(y_w \succ y_l \mid x) 
    = \frac{\text{exp}\bigl({r_\phi(x, y_w)}\bigr)}{\text{exp}\bigl({r_\phi(x, y_w)}\bigr) + \text{exp}\bigl({r_\phi(x, y_l)}\bigr)} 
    = \sigma\!\bigl(r_\phi(x,y_w)-r_\phi(x,y_l)\bigr),
\end{equation}
where $\sigma$ denotes the logistic function.
Thus, the reward function $r_\phi$ is trained to explain human-preferred outcomes, and the learned reward model is subsequently used to optimize a policy aligned with human judgments.

\paragraph{Variational Approach for Personalized Alignment}
A central direction in personalized alignment is to condition reward models and policies on user-specific information~\citep{oh2024, poddar2024personalizing, bose2025lore, shenfeld2025language, gong2025latent}. 
Among these approaches, Variational Preference Learning (VPL)~\citep{poddar2024personalizing} is particularly influential. 
Inspired by variational autoencoders (VAEs)~\citep{kingma2013auto}, VPL introduces a user-specific latent variable $\vz \in \sR^d$ inferred from each user $h$'s preference dataset $\sD_h = \{(x^i, y_w^i, y_l^i)\}_{i=1}^n \subset \sD$. 
The encoder produces an approximate posterior $q_\psi(\vz \mid \sD_h)$, while the decoder predicts rewards for prompt--response pairs $(x,y)$ conditioned on $\vz$, denoted as $r_\phi(x,y,\vz)$. 

Formally, VPL extends the objective in Eq.(\ref{eq:reward_model}) by adding a variational regularization term.
This yields an evidence lower bound (ELBO):
\begin{equation}
    \expect{\substack{h \sim \sH}}\Bigg[\expect{\substack{\vz \sim q_\psi(z \mid \sD_h) \\ (x, y_w, y_l)\sim  \sD_h}}[\log p_\phi (y_w \succ y_l \mid x, \vz)] - \beta \KL \bigl[q_\psi(\vz \mid \sD_h)\Vert p(\vz)\bigr] \Bigg],
    \label{eq:vpl_elbo}
\end{equation}
where $\beta$ is KL divergence weight and the $\log p(\vz)$ represents the prior distribution's log-density, selected as $\mathcal{N}(\mathbf{0},\mathbf{I})$.
This objective maximizes the conditional log-likelihood of preferences while regularizing the user-specific posterior toward the prior, thereby preventing overfitting and encouraging generalizable latent structure. 
By leveraging $\vz$, VPL provides flexibility in modeling personalized traits and has shown strong empirical performance in capturing diverse preferences. 
However, recent work~\citep{nam2025learning} indicates that compressing rich textual preference data into a single latent embedding $\vz$ remains highly challenging.

\paragraph{Inverse Autoregressive Flow}
Normalizing flows~\citep{rezende2015variational} is a framework for constructing flexible posterior distributions by applying a sequence of invertible transformations. 
Among them, Inverse Autoregressive Flow (IAF)~\citep{kingma2016improved} is specifically designed to enrich the expressivity of variational posteriors while preserving computational tractability.
The procedure begins with a base latent variable $\vz_0 \in \sR^d$ and context vector $\vc \in \sR^{d_c}$ drawn from encoder (i.e., $q_\psi(\vz_0 \mid x)= \mathcal N(\vmu, \boldsymbol{\sigma}^2)$ with additional output $\vc$), followed by a series of parameterized, invertible transformations $f_k$. 
After $K$ step transformations, the final variable $\vz_K$ acquires a more complex distribution:
\[
    \vz_0 \sim q_\psi(\vz_0 \mid x), \quad \vz_k = f_k(\vz_{k-1}, \vc), \quad k = 1, \dots, K
\]
When each $f_k$ admits a tractable Jacobian determinant, the density of $\vz_K$ can be computed efficiently via the change-of-variables formula:
\begin{equation}
    \label{eq:cov}
    \log q_\psi(\vz_K \mid x) = \log q_\psi(\vz_0 \mid x) - \sum_{k=1}^{K} \log \det \left|\frac{\partial \vz_k}{\partial \vz_{k-1}}\right|.
\end{equation}
In practice, IAF employs autoregressive neural networks to parameterize shift and scale functions:
\begin{equation}
    \label{eq:IAF}
    \vz_k = \mu_k(\vz_{k-1}, \vc) + \sigma_k(\vz_{k-1}, \vc) \odot \vz_{k-1},
\end{equation}
where $\mu_k$ and $\sigma_k$ are autoregressively conditioned on the preceding dimensions of $\vz_{k-1}$. This autoregressive structure ensures a lower-triangular Jacobian, making the determinant easy to compute:
\begin{equation}
    \label{eq:IAFdet}
    \log \det \left|\frac{\partial \vz_k}{\partial \vz_{k-1}}\right| = \sum_{j=1}^{d} \log \left|\sigma_k^j\right|,
\end{equation}
with $\sigma_k^j$ denoting the $j$-th element of the scale function. 

As a result, IAF enables parallelizable sampling and yields a substantially richer posterior $q_\psi(\vz_K \mid x)$ that captures inter-dimensional dependencies and non-Gaussian structures (e.g., skewness, heavy tails) beyond the capacity of the base posterior~\citep{kingma2016improved, papamakarios2021normalizing}.

%
%

\section{Motivation}
\label{motivation}

\begin{figure}[t]         
  \centering
  \begin{subfigure}[t]{0.32\textwidth} 
    \centering
    \includegraphics[width=\linewidth]{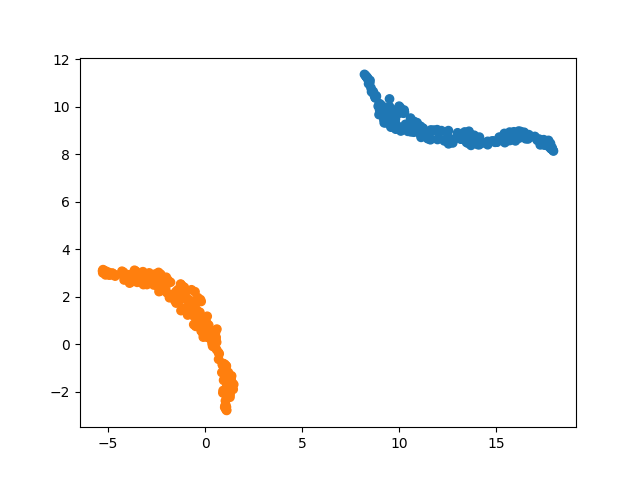}
    \caption{Pets, Llama-3.2-3B}
    \label{fig:vpl_pets_3b}
  \end{subfigure}
  \begin{subfigure}[t]{0.32\textwidth}
    \centering
    \includegraphics[width=\linewidth]{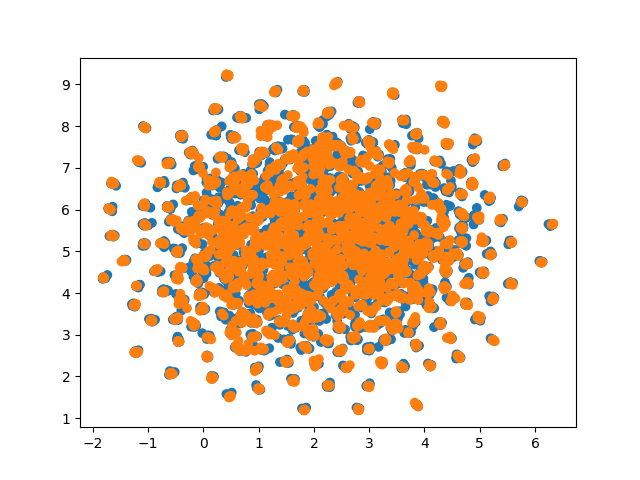}
    \caption{UF-P-2, Llama-3.2-3B}
    \label{fig:vpl_p2_3b}
  \end{subfigure}
  \begin{subfigure}[t]{0.32\textwidth}
    \centering
    \includegraphics[width=\linewidth]{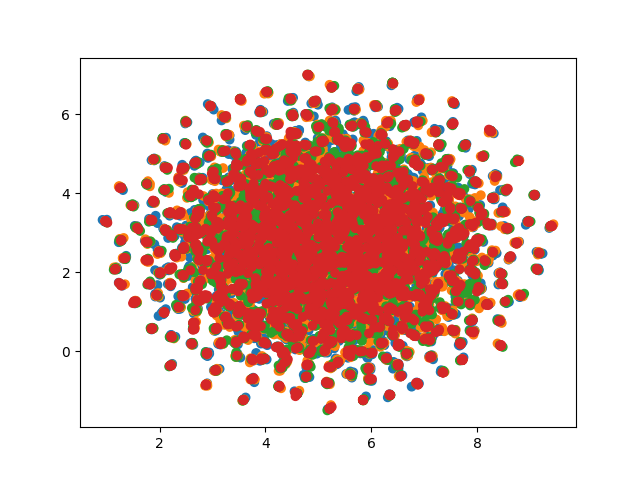}
    \caption{UF-P-4, Llama-3.1-8B}
    \label{fig:vpl_p4_8b}
  \end{subfigure}

  \caption{\textbf{Posterior collapse in Variational Preference Learning.} 
  We visualize latent embeddings $\vz$ from the VPL encoder using 2D UMAP~\citep{mcinnes2018umap}. 
  Each point denotes a user, colored by their preference type. 
  (a) User preference types are distinctly separated, indicating non-collapse. 
  (b), (c) Latent collapse occurs, making preference types indistinguishable.}
  \label{fig:pcinvpl}
\end{figure}

In this section, we explain the posterior collapse that we observed in preference learning and identify some guidance by comparing collapse and non-collapse cases.
Fig.~\ref{fig:pcinvpl} illustrates this phenomenon that we observed in VPL. 
The simple dataset \emph{Pets} simulates multi-modal user preferences over pets (e.g., dog, cat) given a single shared prompt. In contrast, the more complex \emph{UF-P} datasets contain 2 or 4 preference types (e.g., helpfulness, honesty, instruction-following, and truthfulness) and include diverse prompts and responses.
In Fig.~\ref{fig:vpl_pets_3b}, two user types (in different colors) are clearly separated in the latent space for the \emph{Pets}.
However, in \emph{UF-P}, users merge into a single cluster, losing separation as shown in Fig.~\ref{fig:vpl_p2_3b} and ~\ref{fig:vpl_p4_8b}. 

This collapse appears to stem from two factors: (1) noisy and ambiguous human feedback, together with the difficulty of compressing diverse, complex textual preferences in the encoder, often leads to unstable latent learning, which in turn causes the reward decoder to ignore the $\vz$ pathway; and (2) the reward decoder already receives sufficient information from the complete prompt–response pair, allowing it to maximize the likelihood in Eq.(\ref{eq:vpl_elbo}) without relying on $\vz$.
This leads to the latent variable failing to capture user-specific information and becoming uninformative.
Further evidence of posterior collapse is presented in Appendix~\ref{app:diff}.

\begin{figure}[ht] 
  \centering
  \begin{subfigure}[t]{0.5\textwidth} 
    \centering
    \includegraphics[width=\linewidth]{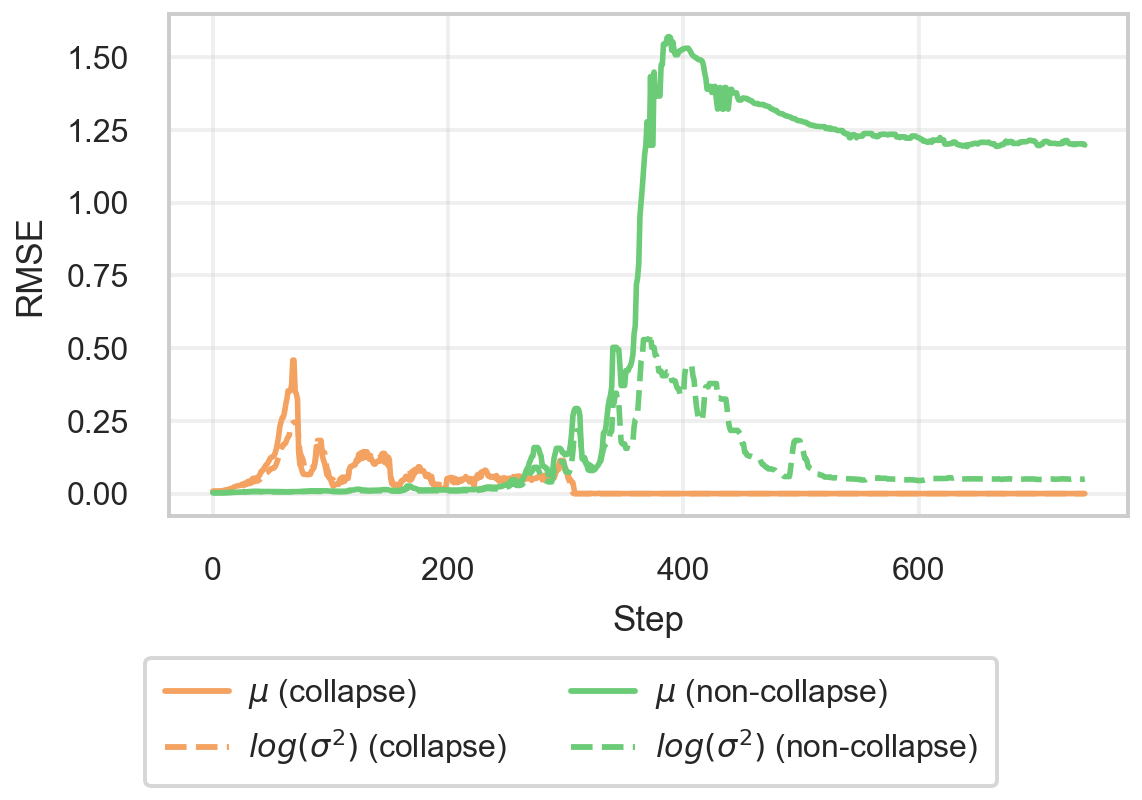}
    \caption{RMSE between original and swap}
    \label{fig:vpl_pair-swap_rmse}
  \end{subfigure}
  \begin{subfigure}[t]{0.472\textwidth}
    \centering
    \includegraphics[width=\linewidth]{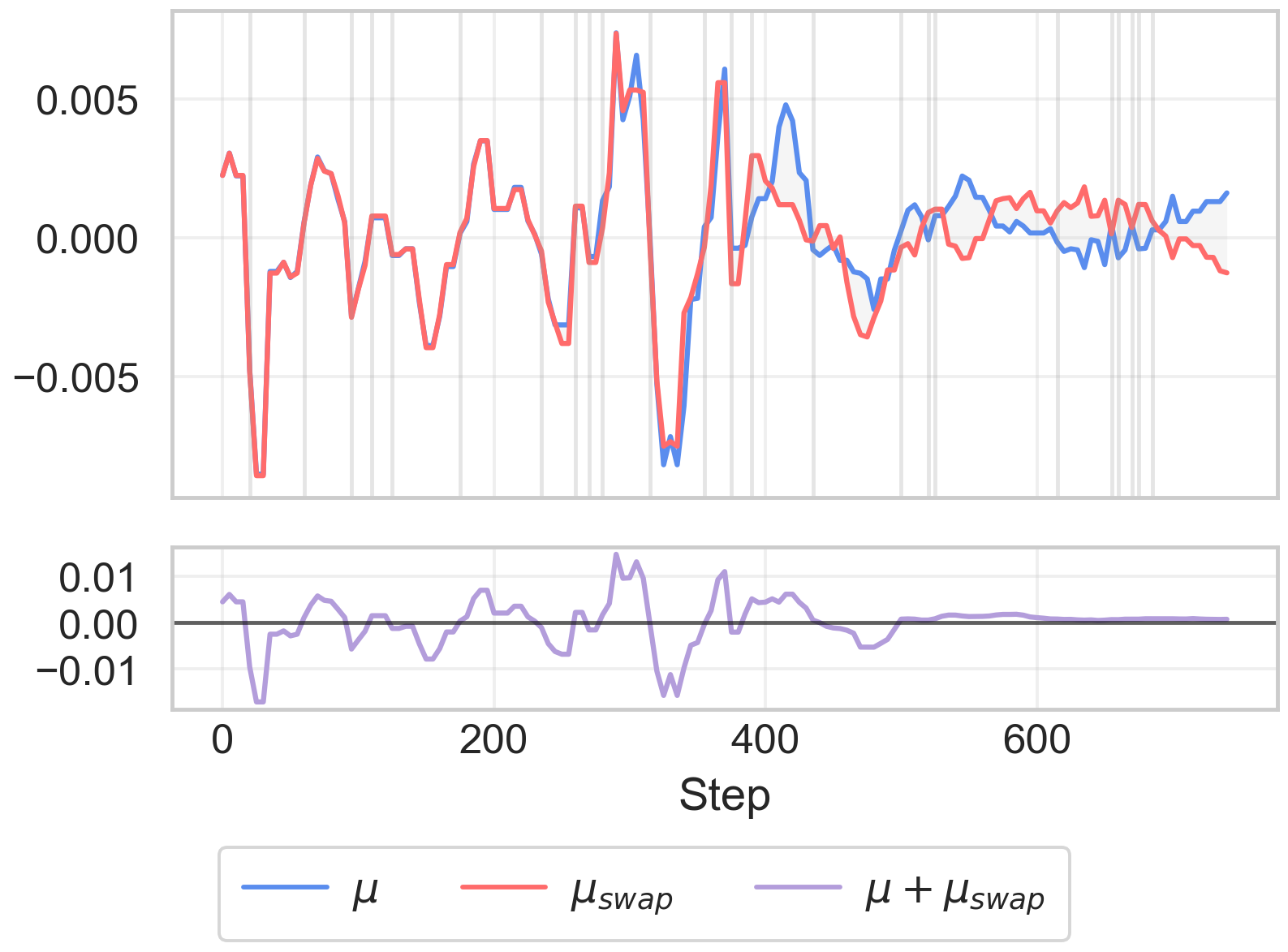}
    \caption{$\mu$ between original and swap}
    \label{fig:vpl_pair-swap_mean}
  \end{subfigure}
  \caption{\textbf{Differences in posterior distribution between original and swapped inputs.} 
  We test how the VPL encoder’s posterior responds when each preference pair is inverted to simulate a user with opposite choices, using the simple dataset \emph{Pets}. 
  (a) Average RMSE between original and swapped inputs across posterior mean $\vmu$ and log-variance $\boldsymbol{\ell}$. 
  Collapse appears in Llama-3.1-8B (orange), where both parameters remain unchanged, whereas Llama-3.2-3B (green) shows distinct behavior. 
  (b) Plot $\vmu$ vs. $\vmu_\text{swap}$ for Llama-3.2-3B; $\vmu + \vmu_\text{swap}$ is in the lower panel. Initially, the curves are similar, but their difference grows and stabilizes as learning continues, resulting in a sign-reversal.}
  \label{fig:vpl_pair-swap}
\end{figure}

To address the posterior collapse in VPL, we examine the information captured in the posterior distribution when user preferences are successfully encoded, and use this insight to guide the design of an effective user-latent space. 
To this end, we conduct a simple swap experiment. 
For a user $h$ with dataset $\sD_h$, suppose the encoder outputs $q_\psi(\vz\mid\sD_h)=\mathcal N(\vmu,\boldsymbol{\sigma}^2)$, where $\vmu$, $\boldsymbol{\sigma}^2 \in \sR^d$. 
We then construct a fictitious user $h_\text{swap}$ with the opposite preference of $h$ by swapping the chosen and rejected responses in every pair, as shown in the right part of Fig.~\ref{fig:overview}. 
Feeding these swapped pairs into the encoder yields $q_\psi(\vz \mid\sD_{h_\text{swap}})=\mathcal N(\vmu_\text{swap},\boldsymbol{\sigma}^2_\text{swap})$.
Fig.~\ref{fig:vpl_pair-swap_rmse} visualizes the RMSE between $\vmu$ and $\vmu_\text{swap}$, and between $\boldsymbol{\ell}=\log\boldsymbol{\sigma}^2$ and $\boldsymbol{\ell}_\text{swap}=\log\boldsymbol{\sigma}_\text{swap}^2$, over the course of training for both collapse and non-collapse cases. 
In the collapse case, the RMSE converges to zero for both $\vmu$ and $\boldsymbol{\ell}$, i.e., $\vmu \approx \vmu_\text{swap}$ and $\boldsymbol{\ell} \approx \boldsymbol{\ell}_{\text{swap}}$, indicating that the latent variable carries no user-specific signal and is effectively ignored by the decoder. 
In the non-collapse case, however, the RMSE of $\mu$ converges to a non-zero value, implying a clear separation between the original user $h$ and the fictitious user $h_\text{swap}$.
In particular, $\vmu$ and $\vmu_\text{swap}$ exhibit a sign-reversal, $\vmu \approx -\vmu_\text{swap}$, as shown in Fig.~\ref{fig:vpl_pair-swap_mean}, while the log-variance remains invariant to swaps, $\boldsymbol{\ell} \approx \boldsymbol{\ell}_{\text{swap}}$, i.e., the posterior distribution exhibits a "\emph{mirrored}" distribution when swapped.
This structural division implies that $\vmu$ captures swap-reversal information, whereas $\boldsymbol{\ell}$ captures swap-invariant information. 
Such disentanglement makes the latent variable essential for the decoder. 
In the next section, we use this insight to develop our new preference learning framework. 

%
\section{Method}
\label{method}
We propose \emph{Swap-guided Preference Learning} (\abbr), a new framework for preference learning that regularizes the encoder with guidance from preference swapping.
This approach consistently reduces posterior collapse while ensuring that user-specific information is faithfully encoded in the latent variable $\vz$.
To achieve this, we introduce three components: (i) Swap-guided Base Regularization, (ii) Preferential Inverse Autoregressive Flow (P-IAF), and (iii) Adaptive Latent Conditioning.

\subsection{Encoding user preferences into a latent}
To encourage our \abbr to encode user preferences into a latent $\vz$, we introduce two strategies in this section. 
The first is to enforce the output from the encoder to satisfy the mirroring of preference swaps, thereby mitigating posterior collapse.
We call the encoder's Gaussian output is termed the base distribution and denoted as $\vz_0$.
The second strategy is to transform $\vz_0$ using an Inverse Autoregressive Flow (IAF), warping the Gaussian $\vz_0$ into a richer distribution $\vz_K$.
In this second strategy, we also control the flow from the base $\vz_0$ to the transformed distribution $z_K$ with guidance from the mirroring of preference swaps.
The two strategies are illustrated in Fig.~\ref{fig:ours_encoder}.
We now explain them one by one.

\begin{figure*}[t]
  \centering
  \includegraphics[width=\linewidth]{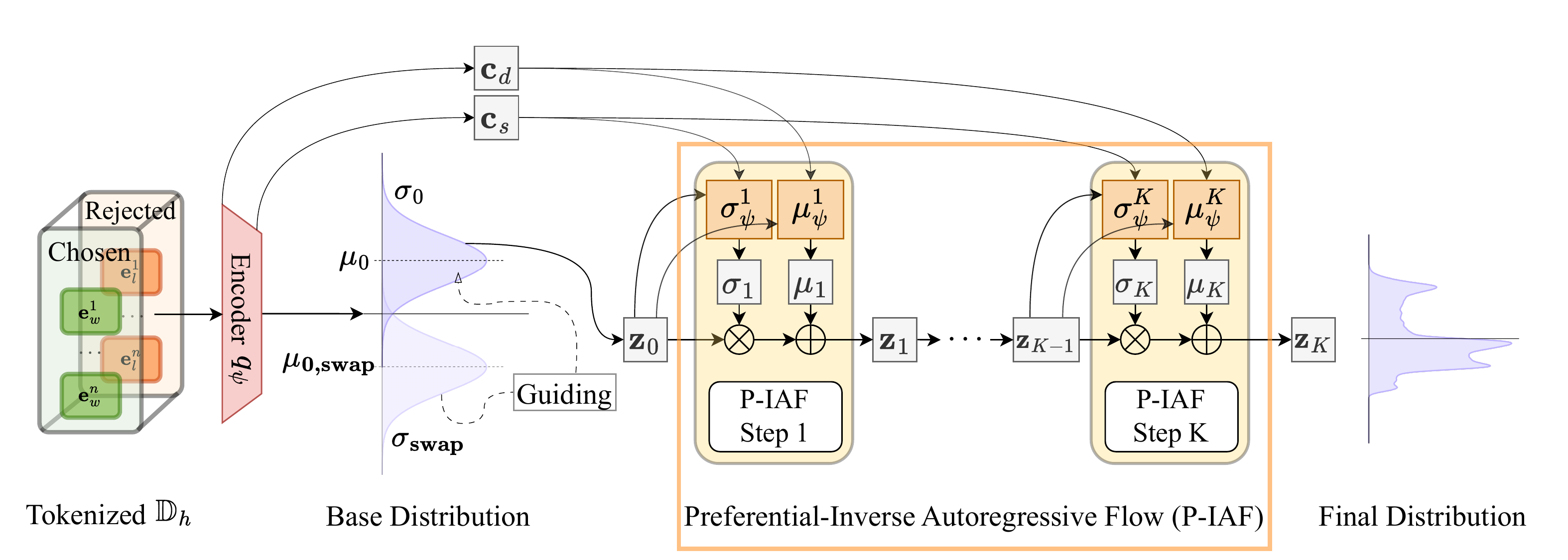}
  \caption{Preference encoding process of \abbr}
\label{fig:ours_encoder}
\end{figure*}

\paragraph{Swap-guided Base Regularization}
Based on the mirroring of preference swaps in section~\ref{motivation}, the encoder is trained to learn user preferences by generating mirrored distributions for annotators $h$ and $h_\text{swap}$.
Specifically, given an annotator $h$ and its fictitious opposite annotator $h_\text{swap}$, with encoder outputs $\mathcal N(\vmu, \boldsymbol{\ell})$ and 
$\mathcal N(\vmu_\text{swap}, \boldsymbol{\ell}_\text{swap})$, respectively, we train the encoder so that the two means $\vmu$ and $\vmu_\text{swap}$ exhibit a sign-reversal, while the two log-variances $\boldsymbol{\ell}$ and $\boldsymbol{\ell}_\text{swap}$ remain invariant.
This is achieved by applying the guidance loss $\mathcal L_\text{guide}$ defined by
\[
\cos(\vmu,\vmu_\text{swap})
= \frac{\vmu^\top \vmu_\text{swap}}{(\|\vmu\|+\varepsilon)(\|\vmu_\text{swap}\|+\varepsilon)}, \quad
\cos(\boldsymbol{\ell},\boldsymbol{\ell}_\text{swap})
= \frac{\boldsymbol{\ell}^\top \boldsymbol{\ell}_\text{swap}}{(\|\boldsymbol{\ell}\|+\varepsilon)(\|\boldsymbol{\ell}_\text{swap}\|+\varepsilon)},
\]
and define the encoder $q_\psi$ training guidance loss as:
\begin{equation}
\label{eq:guidance_loss}
\mathcal{L}_{\text{guide}}
=
\expect{h\sim\sH}
\Big[
\tfrac{1}{2}\big(1 + \cos(\vmu^h, \vmu_\text{swap}^h)\big)
+
\eta\ \tfrac{1}{2}\big(1 - \cos(\boldsymbol{\ell}^h, \boldsymbol{\ell}_\text{swap}^h)\big)
\Big].
\end{equation}
$\eta$ balances mean and variance; $\varepsilon>0$ ensures stability.

\paragraph{Preferential Inverse Autoregressive Flow}
The next step is to apply IAF to warp the Gaussian $\vz_0$ into a multi-modal distribution $\vz_K$.
Unlike the base regularization, we cannot enforce the mirroring property of preference swaps in this transformation, because $\vz_K$ is no longer Gaussian and cannot be characterized in terms of mean and variance.
In other words, the flow from $\vz_0$ to $\vz_K$ under a standard IAF cannot be  directly controlled to satisfy the mirroring property of preference swaps.
To address this limitation, we propose Preferential Inverse Autoregressive Flow (P-IAF), which decomposes the context vector $\vc$ into swap-reversal and swap-invariant components.
Intuitively, the swap-reversal context $\vc_d$ captures the directional preference signals that reflect the mirroring of swaps, while the swap-invariant context $\vc_s$ captures the background information. 
Our P-IAF is defined by 
\begin{equation}
    \label{eq:P-IAF}
    \vz_k = f_{\psi}^{k}(\vz_{k-1}, \vc_d, \vc_s)
    = \mu_k(\vz_{k-1}, \vc_d) + \sigma_k(\vz_{k-1}, \vc_s) \odot \vz_{k-1},
\end{equation}
where $k=1,\dots,K$.
We form $\vc_d$ and $\vc_s$ by a swap-reversal and swap-invariant decomposition of the encoder's additional output $\vc$ (from $\sD_h$) and $\vc_{\text{swap}}$ (from swapped counterpart $\sD_{h_\text{swap}}$) as follows:
\[
\vc_d \triangleq \tfrac12(\vc-\vc_{\text{swap}}), \quad \vc_s \triangleq \tfrac12(\vc+\vc_{\text{swap}}),
\]
which guarantees $\vc=\vc_d+\vc_s,\ \vc_{\text{swap}}=-\vc_d+\vc_s$.
By feeding $\vc_d$ only to the shift function $\mu_k$ and $\vc_s$ only to the scale function $\sigma_k$, P-IAF reduces cross-context coupling between swap-reversal and swap-invariant signals, thereby preserving pair-derived user preference more effectively while retaining IAF's expressivity from the autoregressive composition. 
See Appendix~\ref{app:proof} for details and proof.

Substituting Eq.(\ref{eq:P-IAF}) into Eq.(\ref{eq:cov}) yields the overall log posterior after $K$ flow steps
\begin{equation}
    \label{eq:our_posterior}
    \log q_{\psi}(\vz_K \mid \sD_h) 
    = \log q_\psi(\vz_0 \mid \sD_h) 
    - \sum_{k=1}^{K}\sum_{j=1}^d \log |\sigma_k^j|,
\end{equation}
and the KL divergence of Eq.(\ref{eq:vpl_elbo}) is given by\footnote{%
    For notational simplicity, we denote all learnable parameters by $\psi$. 
    In practice, $\psi$ includes both (i) encoder parameters $\psi_{\text{enc}}$ and (ii) flow parameters $\psi_{\text{flow}} = \{\psi_{\mu_k}, \psi_{\sigma_k}\}_{k=1}^K$ corresponding to the shift and scale function in each flow transformation step $k$.}:
\begin{equation}
    \label{eq:our_kld}
    \KL = \expect{\substack{h \sim \sH}} \bigl[ \log q_\psi(\vz_K \mid \sD_h) - \log p(\vz_K) \bigr],
\end{equation}
where $\log q_{\psi}(\vz_K \mid \sD_h)$ is given in Eq.(\ref{eq:our_posterior}).

\subsection{Decoding personalized rewards from latents}
The decoder scores a prompt–response $(x,y)$ conditioned on the user-latent $\vz_K$, yielding $r_\phi(x,y,\vz_K)$, and is trained to satisfy $r_\phi(x,y_w,\vz_K)>r_\phi(x,y_l,\vz_K)$. 
Extending Eq.(\ref{eq:btl_prob}) about $\vz_K$, the decoder training objective over users $h\,\sim\,\sH$:
\begin{equation}
    \label{eq:our_prob}
    \expect{h\sim\sH}\Bigg[\expect{\substack{z_K \sim q_\psi(z_K \mid \sD_h) \\ (x, y_w, y_l) \sim \sD_h}}\bigl[\log p_\phi(y_w \succ y_l \mid x, z_K)\bigr]\Bigg]
\end{equation}
where $p_\phi(y_w\succ y_l\mid x,\vz_K)=\sigma\!\big(r_\phi(x,y_w,\vz_K)-r_\phi(x,y_l,\vz_K)\big)$ which means preference probability conditioned on $z_K$.

\paragraph{Adaptive Latent Conditioning}
Inspired by feature modulation~\citep{perez2018film}, 
we design a per-user modulation decoder that adapts prompt-response embeddings based on the user-latent embedding $\vz_K$, allowing dynamic influence adjustment when predicting input rewards.
For example, when the latent embedding provides strong signals of user preference, its contribution to reward prediction is amplified, 
whereas when the preference signal is uncertain, the contribution is attenuated.
Detailed modeling of this adaptive conditioning mechanism is provided in Appendix~\ref{app:decoder}.

\subsection{Objective function of \abbr}
Maximize the log-likelihood term from Eq.(\ref{eq:our_prob}) while minimizing the KL divergence term in Eq.(\ref{eq:our_kld}), the ELBO of \abbr is defined across the entire user $\sH$ as:
\begin{equation}
    \label{eq:our_elbo}
    \text{ELBO} = \expect{\substack{h \sim \sH}}\Bigg[\expect{\substack{z_K \sim q_\psi(z_K \mid \sD_h) \\ (x, y_w, y_l)\sim  \sD_h}}[\log p_\phi (y_w \succ y_l \mid x, z_K)] - \beta (\log q_\psi(z_K \mid \sD_h) - \log p(z_K))\Bigg]
\end{equation}

We regularize the base posterior $q_\psi(\vz_0 \mid \sD_h)$ using the guidance loss in Eq.(\ref{eq:guidance_loss}). 
The final objective minimizes:
\begin{equation}
    \label{eq:our_loss}
    \mathcal L(\phi, \psi) = -\text{ELBO}+\lambda\mathcal L_\text{guide}\
\end{equation}
where $\lambda$ controls the strength of the guidance loss term.
Consequently, the reward model explicitly conditions on the user-latent $\vz_K$, yielding a personalized reward $r_\phi(\cdot,\cdot,\vz_K)$; optimizing the policy under this reward personalizes behavior and thus achieves personalized alignment.

\section{Experiments}
In this section, we evaluate the performance of \abbr. 
First, we examine whether \abbr can construct a meaningful latent space without posterior collapse. 
Second, we evaluate whether \abbr effectively improves preference-prediction accuracy. 
\abbr remains stable across different KL divergence weights $\beta$, unlike the earlier approach~\citep{poddar2024personalizing} in our experiments.
Moreover, \abbr consistently outperforms baselines in preference-prediction accuracy. 
Before presenting these results, we describe our experimental setup. 

\paragraph{Baselines}
We compare our method against the following baselines:
\begin{itemize}
    \item \textbf{BTL}~\citep{ouyang2022training}: The standard RLHF based on Bradley--Terry--Luce model.
    \item \textbf{DPL}~\citep{siththaranjan2023}: Distributional Preference Learning, which captures implicit context across the entire preference dataset and models the reward as a distribution but doesn't consider individual user preferences.
    \item \textbf{VPL}~\citep{poddar2024personalizing}: Variational Preference Learning, which employs the user-latent embedding with a simple Gaussian posterior distribution, without swap-guided encoding and latent conditioning.
    \item \textbf{\abbr} (Ours): Our proposed method.
\end{itemize}

For all methods, we use supervised fine-tuned LLMs based on \emph{Llama-3}~\citep{dubey2024llama}, specifically two variants: \emph{Llama-3.2-3B} and \emph{Llama-3.1-8B}.

\paragraph{Datasets}
We conduct experiments on two datasets: a simple preference dataset \emph{Pets} and a complex preference dataset \emph{UltraFeedback-P (UF-P)}~\citep{poddar2024personalizing} derived from \emph{Ultrafeedback}~\citep{cui2023}, featuring user types pursuing values like helpfulness, honesty, instruction-following, and truthfulness.

The \emph{Pets} dataset uses a single shared prompt, "Please talk about one kind of pets." and response is a description of one of four animals (dog, cat, rabbit and bird). 
The dataset defines two user types that agree on the most- and least-preferred animals (bird and rabbit, respectively) but disagree on the relative ordering of the middle options (dog vs. cat), inducing a multi-modal distribution over user preferences.

The \emph{UF-P} dataset assumes that each user $h$ belongs to one of several preference types $\sP$ (e.g., $p \in \sP=\{\text{helpfulness, honesty}\}$). 
It is constructed from the \emph{Ultrafeedback} prompt--response data, where responses are labeled by \emph{GPT-4}~\citep{achiam2023gpt} with scores for each type $p$. 
For each prompt, the winning and losing responses are selected according to the score associated with the target preference type $p$. 
Specifically, \emph{UF-P-2} contains two preference types focusing on helpfulness and honesty, while \emph{UF-P-4} contains four preference types focusing on helpfulness, honesty, instruction-following, and truthfulness. 
Due to its diverse preference modes and a wide variety of prompt--response pairs, the \emph{UF-P} dataset is highly ambiguous and challenging. 

In all datasets, one sample $\sD_h$ corresponds to a user $h$ with a user type $p$. 
This type information is used only when constructing $\sD_h$ (to determine winning and losing responses from \emph{Ultrafeedback}) and for qualitative evaluation (to verify whether user types are well-separated). 
Importantly, the latent embedding relies on user preference data $\sD_h$, not on type $p$.
Additional details about experiments are provided in Appendix~\ref{app:implementation}.

\subsection{Results}
We first demonstrate that our method effectively reduces posterior collapse and encodes a stable latent space. 
To diagnose collapse quantitatively, we evaluate using the \emph{Active Units} (AU) metric from prior work~\citep{burda2015importance}.
AU counts latent dimensions with variability exceeds a small threshold $\delta$; a dimension $u$ is considered active if its posterior mean responses show sufficient variability across the evaluation set $\sD_{\text{eval}}$. 
\[
AU = |\{u:\text{Var}_{\sD_{\text{eval}}}\bigl(\mu_{\psi, u}(\sD_{\text{eval}})\bigr)> \delta\}| 
\]
Thus, AU$\,{=}\,0$ means all latent dimensions are unresponsive across evaluation data.
In these runs, the encoder outputs fixed posterior means and variances with AU$\,{=}\,0$, indicated as \emph{posterior collapse} and shaded gray in tables. Accuracy is the ratio of evaluation samples where predicted rewards match user preferences (i.e., winning responses have higher rewards).

\begin{table}[H]
\caption{Accuracy and active units across $\beta$}
\label{tab:result_pc}
\centering
\begin{tabular}{lcccccc}
\toprule
\multirow{2}{*}{Model} & \multirow{2}{*}{$\beta$} & \multirow{2}{*}{Method} &\multicolumn{2}{c}{UF-P-2} & \multicolumn{2}{c}{UF-P-4} \\
\cmidrule(lr){4-5} \cmidrule(lr){6-7} 
& & & Acc. [\%] & AU [\%] &  Acc. [\%] & AU [\%]\\
\midrule
\multirow{7}{*}{Llama-3.2-3B} 
& \multirow{2}{*}{$3 \times 10^{-7}$} & VPL & \cellcolor{collapse}62.04 $\pm$ 0.16 & \cellcolor{collapse}0.00 $\pm$ 0.00 & \cellcolor{collapse}56.91 $\pm$ 0.12 & \cellcolor{collapse}0.00 $\pm$ 0.00 \\ 
&  & SPL & 62.59 $\pm$ 0.37 & 73.05 $\pm$ 4.69 & 61.52 $\pm$ 0.27 & 76.89 $\pm$ 7.89 \\ 

\cmidrule(lr){2-7}

& \multirow{2}{*}{$3 \times 10^{-6}$} & VPL & 62.37 $\pm$ 0.15 & 88.22 $\pm$ 7.72 & \cellcolor{collapse}57.03 $\pm$ 0.10 & \cellcolor{collapse}0.00 $\pm$ 0.00 \\ 
&  & SPL & \textbf{63.28 $\pm$ 0.13} & \textbf{93.07 $\pm$ 3.18} & 61.56 $\pm$ 0.03 & \textbf{82.32 $\pm$ 3.18} \\ 

\cmidrule(lr){2-7}

& \multirow{2}{*}{$3 \times 10^{-5}$} & VPL & 62.29 $\pm$ 0.19 & 14.03 $\pm$ 6.93 & \cellcolor{collapse}57.00 $\pm$ 0.07 & \cellcolor{collapse}0.00 $\pm$ 0.00 \\ 
&  & SPL & 62.69 $\pm$ 0.20 & 70.05 $\pm$ 8.15 & \textbf{61.62 $\pm$ 0.18} & 77.15 $\pm$ 9.12 \\ 

\midrule

\multirow{7}{*}{Llama-3.1-8B} 
& \multirow{2}{*}{$3 \times 10^{-7}$} & VPL & \cellcolor{collapse}62.46 $\pm$ 0.08 & \cellcolor{collapse}0.00 $\pm$ 0.00 & \cellcolor{collapse}57.25 $\pm$ 0.22 & \cellcolor{collapse}0.00 $\pm$ 0.00\\ 
&  & SPL & 63.58 $\pm$ 0.14 & 92.19 $\pm$ 2.25 & 61.92 $\pm$ 0.08 & 93.75 $\pm$ 4.67 \\ 

\cmidrule(lr){2-7}

& \multirow{2}{*}{$3 \times 10^{-6}$} & VPL & 62.66 $\pm$ 0.23 & 91.05 $\pm$ 4.43 & \cellcolor{collapse}57.14 $\pm$ 0.05 & \cellcolor{collapse}0.00 $\pm$ 0.00 \\ 
&  & SPL & \textbf{63.71 $\pm$ 0.18} & \textbf{97.10 $\pm$ 1.14} & 62.21 $\pm$ 0.06 & \textbf{96.19 $\pm$ 2.33} \\ 

\cmidrule(lr){2-7}

& \multirow{2}{*}{$3 \times 10^{-5}$} & VPL & 62.54 $\pm$ 0.15 & 90.79 $\pm$ 7.96 & \cellcolor{collapse}57.18 $\pm$ 0.11 & \cellcolor{collapse}0.00 $\pm$ 0.00 \\ 
&  & SPL & 63.43 $\pm$ 0.04 & 90.07 $\pm$ 2.98 & \textbf{62.46 $\pm$ 0.07} & 85.90 $\pm$ 3.40 \\ 
\bottomrule
\end{tabular}
\end{table}

Table~\ref{tab:result_pc} shows results for VPL and \abbr across a range of KL-divergence weights $\beta$ under three distinct random seeds. 
Prior methods require careful tuning of $\beta$ to avoid collapse. 
In contrast, \abbr exhibits no posterior collapse in any of the tested settings. 
The advantage is most evident on highly multi-modal preference datasets \emph{UF-P-4}, where VPL collapses under all tested $\beta$ values, but \abbr consistently maintains high AU. Notably, \abbr is much less sensitive to $\beta$.

\begin{table}[H]
\caption{Preference-prediction accuracy (\%) compared with baselines}
\label{tab:result_acc}
\centering
\begin{tabular}{llccc}
\toprule
Model & Method & Pets & UF-P-2 & UF-P-4 \\
\midrule
\multirow{4}{*}{Llama-3.2-3B} & BTL & 57.48 $\pm$ 2.37 & 62.25 $\pm$ 0.03 & 57.07 $\pm$ 0.01 \\
& DPL & 62.02 $\pm$ 1.92 & 62.22 $\pm$ 0.03 & 57.04 $\pm$ 0.05\\
& VPL & 99.67 $\pm$ 0.38 & 62.37 $\pm$ 0.15 & \cellcolor{collapse}57.03 $\pm$ 0.10 \\
& \abbr (Ours)& \textbf{100.0 $\pm$ 0.00} & \textbf{63.28 $\pm$ 0.13} & \textbf{61.56 $\pm$ 0.03} \\
\midrule
\multirow{4}{*}{Llama-3.1-8B} & BTL & 60.74 $\pm$ 0.49 & 62.59 $\pm$ 0.04 & 57.40 $\pm$ 0.28 \\
& DPL & 61.03 $\pm$ 0.25  & 62.74 $\pm$ 0.03 & 57.66 $\pm$ 0.14 \\
& VPL & 75.33 $\pm$ 0.63 & 62.66 $\pm$ 0.23 & \cellcolor{collapse}57.14 $\pm$ 0.05 \\
& \abbr (Ours)& \textbf{100.0 $\pm$ 0.00} & \textbf{63.71 $\pm$ 0.18} & \textbf{62.21 $\pm$ 0.06} \\
\bottomrule
\end{tabular}
\end{table}

Next, Table~\ref{tab:result_acc} compares preference-prediction accuracy against baselines. 
For fairness, we fix $\beta = 3{\times}10^{-6}$—a setting under which VPL is comparatively more stable—and report the mean $\pm$ standard deviation over three distinct random seeds for all methods. 
Across all datasets and models, \abbr achieves higher preference-prediction accuracy than competing baselines. 
These results mean swap-guided base regularization and P-IAF are effectively encoding user preference to identifiable user-latent.
Furthermore, \abbr improves accuracy and prevents collapse without requiring substantial additional computation or memory.
We measured training computation and memory costs on the \emph{UF-P-4} dataset.
As shown in the Table~\ref{tab:cost}, \abbr achieves these gains with only minimal computational and memory overhead.

\begin{table}[H]
\centering
\caption{Training computation and memory costs on UF-P-4}
\label{tab:cost}
\begin{tabular}{llcccc}
\toprule
Model & Method & sample/s & GPU hour& peak memory \\
\midrule
\multirow{2}{*}{Llama-3.2-3B} 
& VPL & 6.070 & 13.363 & 6.25GB \\
& \abbr & 5.952 & 13.590 & 6.65GB \\
\midrule
\multirow{2}{*}{Llama-3.1-8B} 
& VPL & 3.370 & 23.791 & 11.37GB \\
& \abbr & 3.355 & 23.945 & 11.76GB \\
\bottomrule
\end{tabular}
\end{table}

We further examine the learned latent spaces qualitatively. Fig.~\ref{fig:latent_tsne} visualizes the encoded user-latent for the \emph{UF-P} dataset using Llama-3.1-8B.
\abbr yields more compact and distinctly separated embeddings compared to VPL.
This shows that swap-guided base regularization and P-IAF of \abbr effectively encode user preferences in the latent space.
Further analysis of additional experiments is provided in Appendix~\ref{app:additional_experiments}.

\begin{figure*}[h]         
  \centering
  \begin{subfigure}[t]{0.24\textwidth} 
    \centering
    \includegraphics[width=\linewidth]
    {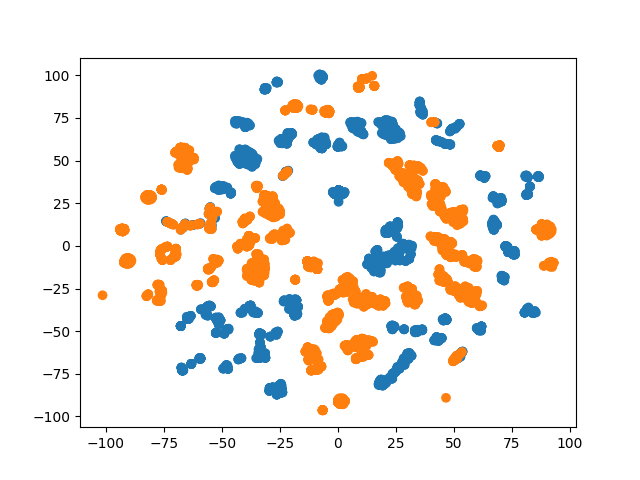}
    \caption{VPL (UF-P-2)}
    \label{fig:vpl_p2_8b_tsne}
  \end{subfigure}
  \begin{subfigure}[t]{0.24\textwidth} 
    \centering
    \includegraphics[width=\linewidth]
    {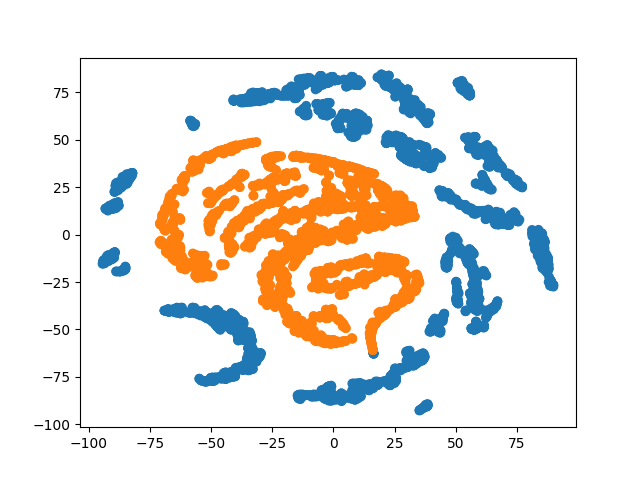}
    \caption{\abbr (UF-P-2)}
    \label{fig:spl_p2_8b_tsne}
  \end{subfigure}
  \begin{subfigure}[t]{0.24\textwidth}
    \centering
    \includegraphics[width=\linewidth]
    {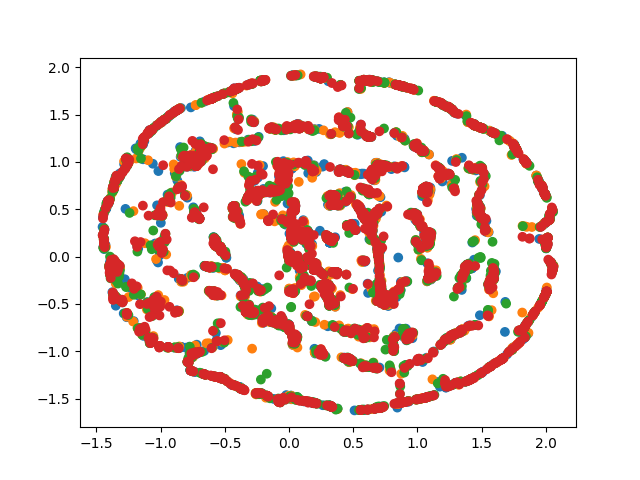}
    \caption{VPL (UF-P-4)}
    \label{fig:vpl_p4_8b_tsne}
  \end{subfigure}
  \begin{subfigure}[t]{0.24\textwidth}
    \centering
    \includegraphics[width=\linewidth]
    {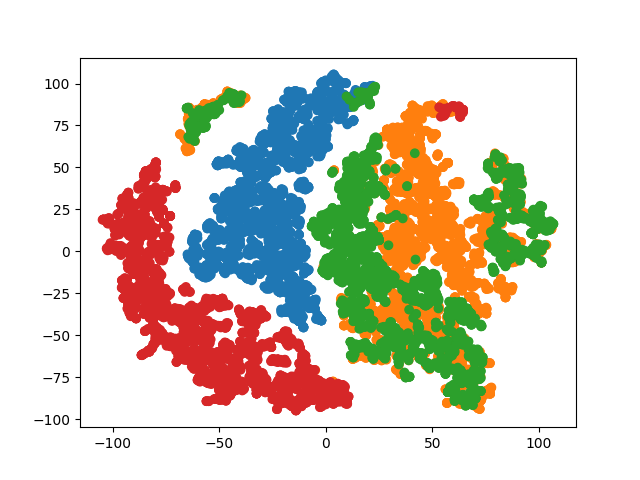}
    \caption{\abbr (UF-P-4)}
    \label{fig:spl_p4_8b_tsne}
  \end{subfigure}

  \caption{\textbf{Latent embeddings learned on the \emph{UF-P} dataset.}  We visualize latent embeddings $\vz$ from baselines and \abbr (Ours) encoder using 2D t-SNE~\citep{maaten2008visualizing}.
  Each point denotes a user, colored by their preference type. Compared to the VPL, \abbr yields much clearer separation in the latent space.}
  \label{fig:latent_tsne}
\end{figure*}

\section{Conclusion}
We proposed \emph{Swap-guided Preference Learning} (\abbr), a framework that overcomes the failure mode in preference learning on complex textual preference data. 
Across all experiments, \abbr consistently improves prediction accuracy over baselines and prevents collapse.
These results suggest that combining our swap-guided base regularization, P-IAF, and adaptive latent conditioning effectively encodes user-specific latents from complex textual preferences, even with sparse preference signals.
Consequently, by explicitly conditioning the reward on the user latent and optimizing the policy under this reward, our framework enables user-specific behaviors, achieving personalized alignment.

\paragraph{Limitation}
Our study focuses on encoding user preferences from independent, single-turn comparison data. 
This data requirement can be burdensome and may feel unnecessary from a user perspective. 
We believe our framework can be extended to preferences expressed over natural, multi-turn dialogue; we consider this for future work.

\subsubsection*{Acknowledgments}
This research was supported by the KIST Institutional Program (2E33801).

\bibliography{iclr2026_conference}
\bibliographystyle{iclr2026_conference}

\newpage

\appendix
\section*{Appendix}

\section{Evidence for Collapse via Posterior–Prior Response}
\label{app:diff}

\begin{figure}[ht]
  \centering
  \begin{subfigure}[ht]{0.615\textwidth} 
    \centering
    \includegraphics[width=\linewidth]
    {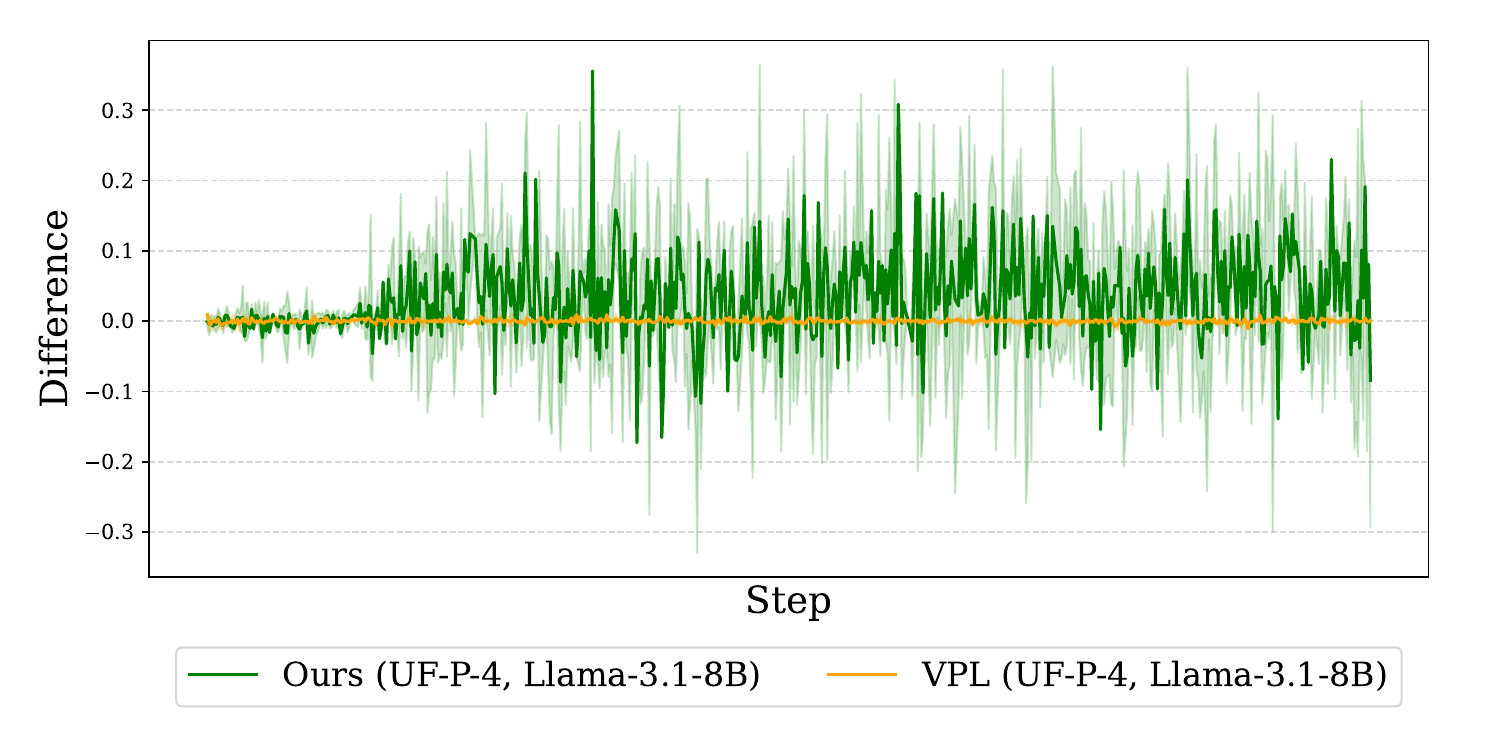}
    \caption{Difference in log-probability}
    \label{fig:z_diff}
  \end{subfigure}
  \begin{subfigure}[ht]{0.375\textwidth} 
    \centering
    \includegraphics[width=\linewidth]
    {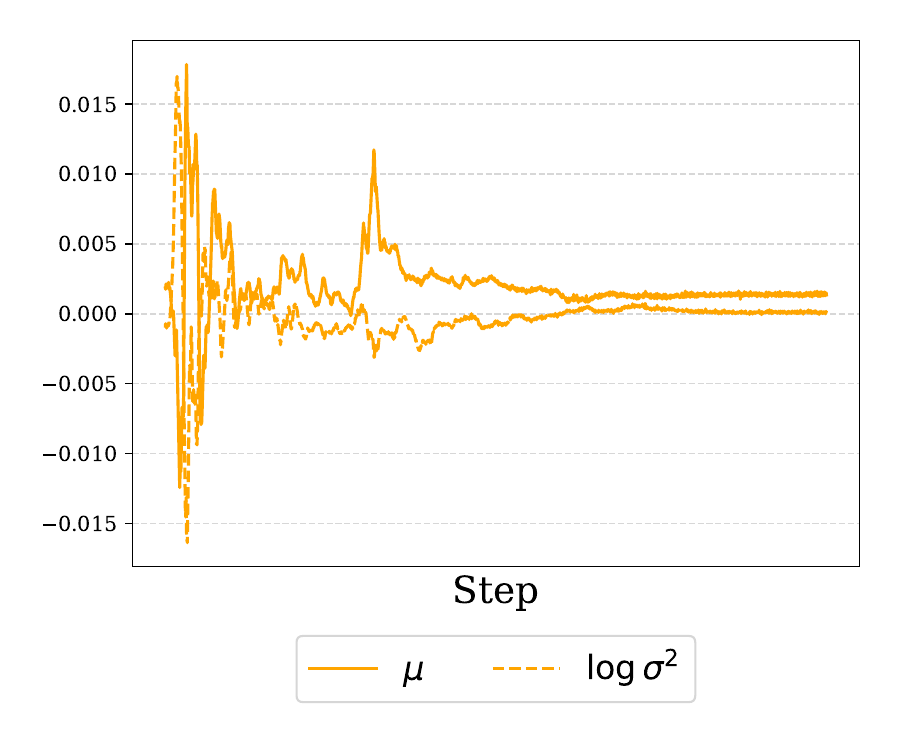}
    \caption{VPL $\mu$ and $\log\sigma^2$}
    \label{fig:vpl_mean_var}
  \end{subfigure}
  \caption{Evidence of posterior collapse in preference learning}
  \label{fig:pc}
\end{figure}

We provide evidence that VPL causes the latent to collapse, preventing meaningful encoding during training. 
Fig.~\ref{fig:z_diff} contrasts decoder outputs from a approximate posterior latent and a noise vector $\boldsymbol{\eps}$ from the prior $\mathcal{N}(\boldsymbol{0},\boldsymbol{I})$. 
Specifically, we compute
\[
    \bigl[\log p_\phi(y_w \succ y_l \mid x, \vz) - \log p_\phi(y_w \succ y_l \mid x, \boldsymbol{\eps})\bigr].
\]

In non-collapsing runs (e.g., with our \abbr), the difference persists, indicating $\vz$'s informative signal. Conversely, under VPL, the difference remains negligible during training.
In this regime, Fig.~\ref{fig:vpl_mean_var} shows that the encoder’s $\vmu$ and $\boldsymbol{\ell}=\log\boldsymbol{\sigma}^2$ initially carry signal but soon drift toward $\vmu \approx 0$ and $\log\boldsymbol{\sigma}^2 \approx 0$, making the posterior almost the same as the prior. 
The encoded $\vz$ lacks helpful information for the decoder's reward decision, resulting in a trivial solution that reduces the KL penalty to zero.

\section{Justification for P-IAF}
\label{app:proof}
For a prompt-response pair $(x, y_w, y_l)$ and a latent $\vz\in\sR^d$, let us define
\[
\Delta r_\phi(\vz)\triangleq r_\phi(x,y_w,\vz)-r_\phi(x,y_l,\vz),
\] which is the part inside the sigmoid $\sigma$ in Eq.(\ref{eq:btl_prob}). 
In the swap-guided base regularization, we regularize our encoder $q_\psi$ so that  $\vmu=-\vmu_{\text{swap}}$ and $\boldsymbol{\sigma}=\boldsymbol{\sigma}_{\text{swap}}$. 
Assuming opposite coupling for the fictitious annotator $h_{\text{swap}}$, i.e.,
$\boldsymbol{\eps}_{\text{swap}}=-\boldsymbol{\eps}$, the latent samples become
\[
\vz = \vmu + \boldsymbol{\sigma} \odot \boldsymbol{\eps}, \quad \vz_\text{swap} = \vmu_\text{swap} + \boldsymbol{\sigma}_\text{swap}\odot\boldsymbol{\eps}_\text{swap},   
\]respectively. 
Consequently, we obtain
\begin{equation}
    \label{eq:swap_z}
    \vz_0 = -\vz_{0,\text{swap}}.
\end{equation}
from the encoder $q_\psi$ in Fig.~\ref{fig:ours_encoder} under swapping. 
Finally, we regularize the base posterior for probability consistency
\begin{equation}
    \label{eq:swap_prob}
    p_\phi(y_w \succ y_l \mid x, \vz_0) 
    = p_\phi(y_l \succ y_w \mid x, \vz_{0,\text{swap}}). 
\end{equation}

However, unlike the base posterior $\vz_0$, we cannot directly regularize after transforming $\vz=\vz_0$ into $\vz_{K}$:
\begin{equation}
    \label{eq:swap_prob_zK}
    p_\phi(y_w \succ y_l \mid x, \vz_K) 
    = p_\phi(y_l \succ y_w \mid x, \vz_{K,\text{swap}}), 
\end{equation}
because IAF entangles dimensions and contexts, so the posterior's mirrored structure need not be preserved after the flow.
However, by supplying $\vc_s$ and $\vc_d$ to the $\mu_k$ and $\sigma_k$ functions as input arguments separately, we can obtain similar results as base regularization indirectly. 
We will show it in this appendix. 
First, let us define swap probability error of transformed distribution $\vz_K$ by
\begin{equation}
    \label{eq:swap_prob_error}
    \delta_p \triangleq \sigma\bigl(\Delta r_\phi(\vz_K)\bigr)-\sigma\bigl(-\Delta r_\phi(\vz_{K,\text{swap}})\bigr).
\end{equation}
Further, we assume that (A1) $\vz\mapsto \Delta r_\phi(\vz)$ is Lipschitz; (A2) opposite coupling is used for the base noise, i.e., $\boldsymbol{\eps}=-\boldsymbol{\eps}_{\text{swap}}$; 
(A3) each step $k$-th scale function $\sigma_k$ is bounded by $\|\sigma_k(\cdot)\|_{\infty}\le \rho_k$;

\emph{The key idea behind our justification of P-IAF is to demonstrate that the mirroring of preference swaps is realized in the transformed posterior $\vz_K$ by showing that the swap probability error $\delta_p$ of our P-IAF is smaller than that of IAF.}

\begin{lemma}
    \label{lem:1}
    Let us suppose that $\vz_0$ and $\vz_{0,\text{swap}}$ are warped to $\vz_K$ and $\vz_{K,\text{swap}}$ respectively, by P-IAF.
    Then, the swap probability error $\delta_p$ given in Eq.(\ref{eq:swap_prob_error}) is bounded by 
    \begin{equation}
        \label{eq:swap_prob_error_bound}
        |\delta_p| \,\le\, \tfrac{1}{4}\,\delta_{r,K}\,+\,\tfrac{1}{4}\,L_r\,\|\delta_{z,K}\|.
    \end{equation}
    where the reward violation $\delta_{r,K}\triangleq \bigl|\Delta r_\phi(\vz_K) + \Delta r_\phi(-\vz_K)\bigr|$ and the latent mismatch $\delta_{z,K}\triangleq \vz_{K,\text{swap}} + \vz_K$.
\end{lemma}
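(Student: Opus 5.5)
The plan is to first bound $|\delta_p|$ by a difference of reward margins, and then split that difference with a triangle inequality into a reward-violation term and a latent-mismatch term.

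\textbf{Step 1 (Lipschitz property of the sigmoid).} The logistic function $\sigma$ satisfies $\sigma'(t)=\sigma(t)(1-\sigma(t))\le \tfrac14$, so by the mean value theorem it is $\tfrac14$-Lipschitz. Applying this to Eq.(\ref{eq:swap_prob_error}) gives
\[
|\delta_p| \le \tfrac14\,\bigl|\Delta r_\phi(\vz_K)+\Delta r_\phi(\vz_{K,\text{swap}})\bigr|.
\]
This reduces the problem to controlling the sum of the two reward margins.

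\textbf{Step 2 (add and subtract the mirrored latent).} Insert $\Delta r_\phi(-\vz_K)$ inside the absolute value:
\[
\bigl|\Delta r_\phi(\vz_K)+\Delta r_\phi(\vz_{K,\text{swap}})\bigr|
\le \bigl|\Delta r_\phi(\vz_K)+\Delta r_\phi(-\vz_K)\bigr| + \bigl|\Delta r_\phi(\vz_{K,\text{swap}})-\Delta r_\phi(-\vz_K)\bigr|.
\]
The first term is exactly $\delta_{r,K}$. For the second term, invoke (A1) with Lipschitz constant $L_r$ of $\vz\mapsto\Delta r_\phi(\vz)$:
\[
\bigl|\Delta r_\phi(\vz_{K,\text{swap}})-\Delta r_\phi(-\vz_K)\bigr| \le L_r\,\|\vz_{K,\text{swap}}-(-\vz_K)\| = L_r\,\|\delta_{z,K}\|.
\]
Multiplying the combined bound by $\tfrac14$ yields Eq.(\ref{eq:swap_prob_error_bound}).

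\textbf{Role of the P-IAF hypothesis.} The inequality itself uses only (A1) and the sigmoid bound, so it holds for any flow. The hypothesis that $\vz_K$ and $\vz_{K,\text{swap}}$ come from P-IAF matters only for interpreting the bound afterwards, namely for showing that $\|\delta_{z,K}\|$ is small. That follow-up argument would start from $\vz_0=-\vz_{0,\text{swap}}$ (Eq.(\ref{eq:swap_z}), which relies on (A2)). It would then use the fact that $\vc_d$ flips sign and $\vc_s$ is invariant under swapping, together with the bounds $\rho_k$ from (A3), to propagate the mismatch recursively through the $K$ flow steps.

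\textbf{Main obstacle.} Within the lemma as stated, the only step needing care is choosing the intermediate point $-\vz_K$. This choice is what makes the two remaining terms exactly match the defined quantities $\delta_{r,K}$ and $\delta_{z,K}$. The remaining work is routine.
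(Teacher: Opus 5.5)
Your proof is correct and is essentially the paper's argument: the $\tfrac14$-Lipschitz bound for the logistic function, insertion of $\Delta r_\phi(-\vz_K)$ with the triangle inequality, and (A1) on the second term to get $L_r\|\delta_{z,K}\|$. You are also right that the P-IAF hypothesis is not used in the inequality itself. Your write-up is in fact cleaner than the paper's displayed chain, which writes the triangle-inequality step as an equality and attaches the factor $\tfrac14$ only to the first term.
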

\begin{proof}
    For all $a,b\in\sR$, the logistic satisfies $|\sigma(a)-\sigma(b)|\le \frac{1}{4} |a-b|$, the swap probability error $\delta_p$ is bounded by 
    \begin{align*}
    |\delta_p| &\ = \bigl|\sigma(\Delta r_\phi(\vz_K))-\sigma(-\Delta r_\phi(\vz_{K,\text{swap}}))\bigr| \\
    &\ \le \tfrac{1}{4}\,\big|\Delta r_\phi(\vz_K) + \Delta r_\phi(\vz_{K,\mathrm{swap}})\big|.
    \end{align*}
    
    Using the triangle inequality, we obtain
    \begin{equation}
        \label{eq:swap_prob_error_bound_proof}
        \begin{aligned}
            |\delta_p| &\ \le \tfrac{1}{4}\,\big|\Delta r_\phi(\vz_K) + \Delta r_\phi(\vz_{K,\mathrm{swap}})\big| \\
            &\ =\tfrac{1}{4}\,\big|\Delta r_\phi(\vz_K) + \Delta r_\phi(-\vz_K) + \Delta r_\phi(\vz_{K,\mathrm{swap}}) - \Delta r_\phi(-\vz_K)\big| \\
            &\ =\tfrac{1}{4}\, \underbrace{\big|\Delta r_\phi(\vz_K)+\Delta r_\phi(-\vz_K)\big|}_{=\ \delta_{r,K}} + \big|\Delta r_\phi(\vz_{K,\text{swap}})-\Delta r_\phi(-\vz_K)\big|
        \end{aligned}
    \end{equation}

    Further, using the Lipschitz assumption (A1), 
    \[
    \big|\Delta r_\phi(\vz_{K,\text{swap}})-\Delta r_\phi(-\vz_K)\big|\le L_r\|\vz_{K,\text{swap}}-(-\vz_K)\|=L_r\|\delta_{z,K}\|.
    \]
    Then we obtain Eq.(\ref{eq:swap_prob_error_bound}) by combining reward violation and latent mismatch.
\end{proof}

\begin{lemma}
    \label{lem:2}
    Let us suppose that the base posterior is given by $q_\psi(\vz_0\mid\sD_h)=\mathcal N(\vmu,\boldsymbol{\sigma}^2)$ and $q_\psi(\vz_0\mid\sD_{h_\text{swap}})=\mathcal N(\vmu_\text{swap},\boldsymbol{\sigma}^2_\text{swap})$.
    When we sample the latent $\vz_0$ and $\vz_{0,\text{swap}}$ based on assumption (A2), the base mismatch defined by 
    \[
        \delta_{z,0}\triangleq \vz_{0,\text{swap}}+\vz_0=(\vmu+\vmu_{\text{swap}}) + (\boldsymbol{\sigma}-\boldsymbol{\sigma}_{\text{swap}})\odot\boldsymbol{\eps}.
    \]
    And also bounded by
    \begin{equation}
        \label{eq:base_mismatch_final}
        \expect{}\|\delta_{z,0}\|\;\le\;\|\vmu+\vmu_{\mathrm{swap}}\| +\tfrac12\,\text{exp}\, (\boldsymbol{\ell}_{\max}^{(\infty)}\,/\,2)\;\|\boldsymbol{\ell}-\boldsymbol{\ell}_{\text{swap}}\|,
    \end{equation}
    where $\boldsymbol{\sigma}=\text{exp}({\boldsymbol{\ell}/2})$.
\end{lemma}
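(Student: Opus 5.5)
We first derive the identity for $\delta_{z,0}$ directly from the reparameterization. By (A2) we write $\vz_0=\vmu+\boldsymbol{\sigma}\odot\boldsymbol{\eps}$ and $\vz_{0,\text{swap}}=\vmu_\text{swap}+\boldsymbol{\sigma}_\text{swap}\odot\boldsymbol{\eps}_\text{swap}=\vmu_\text{swap}-\boldsymbol{\sigma}_\text{swap}\odot\boldsymbol{\eps}$. Adding the two gives $\delta_{z,0}=(\vmu+\vmu_\text{swap})+(\boldsymbol{\sigma}-\boldsymbol{\sigma}_\text{swap})\odot\boldsymbol{\eps}$, which is the stated expression.

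For the bound we take expectations over $\boldsymbol{\eps}\sim\mathcal N(\mathbf 0,\mathbf I)$ and apply the triangle inequality:
\[
\expect{}\|\delta_{z,0}\|\le\|\vmu+\vmu_\text{swap}\|+\expect{}\bigl\|(\boldsymbol{\sigma}-\boldsymbol{\sigma}_\text{swap})\odot\boldsymbol{\eps}\bigr\|.
\]
To control the second term we use Jensen's inequality, $\expect{}\|\boldsymbol{v}\odot\boldsymbol{\eps}\|\le(\expect{}\|\boldsymbol{v}\odot\boldsymbol{\eps}\|^2)^{1/2}=\|\boldsymbol{v}\|$, since $\expect{}\eps_j^2=1$ and the cross terms vanish. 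This reduces the task to bounding $\|\boldsymbol{\sigma}-\boldsymbol{\sigma}_\text{swap}\|$ in terms of the log-variances.

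The step needing care is converting the difference of standard deviations into a difference of log-variances. We apply the mean value theorem coordinatewise to $g(t)=e^{t/2}$. For each $j$,
\[
|\sigma_j-\sigma_{\text{swap},j}|=\tfrac12 e^{\xi_j/2}\,|\ell_j-\ell_{\text{swap},j}|
\]
for some $\xi_j$ lying between $\ell_j$ and $\ell_{\text{swap},j}$. We read $\boldsymbol{\ell}_{\max}^{(\infty)}$ as $\max_j\max(\ell_j,\ell_{\text{swap},j})$, the largest log-variance entry over both posteriors. Since $\xi_j\le\boldsymbol{\ell}_{\max}^{(\infty)}$ and the exponential is monotone, every coordinate satisfies $|\sigma_j-\sigma_{\text{swap},j}|\le\tfrac12\exp(\boldsymbol{\ell}_{\max}^{(\infty)}/2)\,|\ell_j-\ell_{\text{swap},j}|$. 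Taking the Euclidean norm of this coordinatewise bound gives $\|\boldsymbol{\sigma}-\boldsymbol{\sigma}_\text{swap}\|\le\tfrac12\exp(\boldsymbol{\ell}_{\max}^{(\infty)}/2)\|\boldsymbol{\ell}-\boldsymbol{\ell}_\text{swap}\|$.

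Substituting this into the Jensen bound yields Eq.(\ref{eq:base_mismatch_final}). The main obstacle is only the definitional point of what $\boldsymbol{\ell}_{\max}^{(\infty)}$ means: it must dominate both $\boldsymbol{\ell}$ and $\boldsymbol{\ell}_\text{swap}$ coordinatewise, and we state that convention explicitly.
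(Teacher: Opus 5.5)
Your proof is correct and follows the paper's argument step for step: the triangle inequality, the Jensen bound $\mathbb{E}\|\mA\boldsymbol{\eps}\|\le\|\mA\|$ with $\mA=\mathrm{diag}(\boldsymbol{\sigma}-\boldsymbol{\sigma}_{\text{swap}})$, and the mean value theorem for $g(t)=e^{t/2}$ applied coordinatewise. The only difference is the definition of $\boldsymbol{\ell}_{\max}^{(\infty)}$: the paper sets $\boldsymbol{\ell}_{\max}^{(\infty)}\triangleq\max\{\|\boldsymbol{\ell}\|_\infty,\|\boldsymbol{\ell}_{\text{swap}}\|_\infty\}$, which is at least your signed coordinatewise maximum, so your slightly sharper bound implies the paper's by monotonicity of the exponential.
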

\begin{proof}
    \begin{equation}
        \label{eq:base_mismatch}
        \expect{}\|\delta_{z,0}\|\;\le\;\|\vmu+\vmu_{\mathrm{swap}}\|
        +\|\boldsymbol{\sigma}-\boldsymbol{\sigma}_{\text{swap}}\|,
    \end{equation}
    since $\expect{}\|\mA\eps\|\le \sqrt{\expect{}\|\mA\boldsymbol{\eps}\|^2}=\|\mA\|$ where $\mA\triangleq \text{diag}(\boldsymbol{\sigma}-\boldsymbol{\sigma}_\text{swap}) \in \sR^{d \times d}$.
    
    Moreover, $\boldsymbol{\sigma}=\text{exp}({\boldsymbol{\ell}/2})$, by the mean value theorem for $g(t)=\exp(t/2)$, then, $|g(a)-g(b)|=\tfrac{1}{2}\text{exp}(\xi/2)|a-b| \le\tfrac{1}{2}\text{exp}(\max\{a,b\}\,/\,2)|a-b|$ for some $\xi$ between $a$ and $b$.
    \begin{equation}
        \label{eq:sigma_diff}
        \|\boldsymbol{\sigma}-\boldsymbol{\sigma}_{\text{swap}}\|
        \;\le\; \tfrac12\,\text{exp}({\boldsymbol{\ell}}_{\max}^{(\infty)}\,/\,2)\;\|\boldsymbol{\ell}-\boldsymbol{\ell}_{\text{swap}}\|,
        \qquad \boldsymbol{\ell}_{\max}^{(\infty)}\triangleq \max\{\|\boldsymbol{\ell}\|_{\infty},\|\boldsymbol{\ell}_{\text{swap}}\|_{\infty}\}.
    \end{equation}
\end{proof}
Hence, base regularization Eq.(\ref{eq:guidance_loss}) directly decreases the base mismatch $\|\delta_{z,0}\|$.

From now on, we will compute the swap probability errors of our P-IAF and IAF methods one by one.
Before deriving the swap probability errors of the two normalizing flows P-IAF and IAF, let us consider context vector $\vc$, which is an additional output of encoder $q_\psi$.
For further development, we decompose the context vector $\vc$ into a swap-reversal context $\vc_d$ and swap-invariant context $\vc_s$ as:
\[
\vc_d \,=\, \tfrac12(\vc-\vc_\text{swap}),\qquad
\vc_s \,=\, \tfrac12(\vc+\vc_\text{swap}),\qquad
\vc \,=\, \vc_d + \vc_s,
\]
which ensures $\vc_{d,\text{swap}}=-\vc_d$ and $\vc_{s,\text{swap}}=\vc_s$.

\paragraph{Assumption (A4).}
Let $\mu_k,\sigma_k$ denote the $k$-th step shift and scale function.
There exist non-negative constants \footnote{
The shift and scale networks are compositions of affine maps and smooth activations; hence they are locally Lipschitz on the working domain considered here.
}
\[
L^{z}_{\mu,k},\quad L^{c_d}_{\mu,k},\quad L^{c_s}_{\mu,k},\quad L^{z}_{\sigma,k},\quad L^{c_d}_{\sigma,k},\quad L^{c_s}_{\sigma,k}
\]
such that, for all $(\vz,\vc_d,\vc_s)$ and $(\vz',\vc_d',\vc_s')$ in the valid input space,
\[
\begin{aligned}
\|\mu_k(\vz,\vc_d,\vc_s)-\mu_k(\vz',\vc_d',\vc_s')\|
&\ \le\ L^{z}_{\mu,k}\,\|\vz-\vz'\|
     + L^{c_d}_{\mu,k}\,\|\vc_d-\vc_d'\|
     + L^{c_s}_{\mu,k}\,\|\vc_s-\vc_s'\|,\\
\|\sigma_k(\vz,\vc_d,\vc_s)-\sigma_k(\vz',\vc_d',\vc_s')\|
&\ \le\ L^{z}_{\sigma,k}\,\|\vz-\vz'\|
     + L^{c_d}_{\sigma,k}\,\|\vc_d-\vc_d'\|
     + L^{c_s}_{\sigma,k}\,\|\vc_s-\vc_s'\|.
\end{aligned}
\]

\begin{lemma}[Transformed mismatch of P-IAF]
    \label{lem:3}
    Let us consider a normalizing flow P-IAF given by 
    \[
    \vz_k=\mu_k(\vz_{k-1},\vc_d)+\sigma_k(\vz_{k-1},\vc_s)\odot \vz_{k-1}
    \]
    
    If we define the transformed mismatch $\delta_{z,k}$ at the $k$-th step as:
    \[
        \delta_{z,k} \triangleq \vz_{k,\text{swap}}+\vz_k
    \]
    
    Then, the mismatch $\delta_{z,k}$ of P-IAF is bounded by
    \begin{equation}
        \label{eq:piaf-latent-mismatch}
        \begin{aligned}
            \norm{\delta_{z,k}} & \le
            \big(\rho_k+L_{\mu,k}^z+L_{\sigma,k}^z\,\norm{\vz_{k-1}}\big)
            \,\norm{\delta_{z,k-1}} \\
            & + \underbrace{\norm{\delta_{\mu,k}(\vc_d)}}_{\textbf{swap-reversal violation ($\mu$)}}
            + \underbrace{\norm{\delta_{\sigma,k}(\vc_s)}_{\infty}\,\norm{\vz_{k-1}}}_{\textbf{swap-invariant violation ($\sigma$)}}
        \end{aligned}
    \end{equation}
    where we define the $\mu$ swap-reversal violation and $\sigma$ swap-invariant violations at step $k$ as:
    \begin{equation}
        \label{eq:swap_violation}
        \delta_{\mu,k}(\vc)\triangleq\mu_k(\vz_{k-1},\vc)+\mu_k(-\vz_{k-1},\vc_{\text{swap}}),\qquad
        \delta_{\sigma,k}(\vc)\triangleq\sigma_k(\vz_{k-1},\vc)-\sigma_k(-\vz_{k-1},\vc_{\text{swap}}).
    \end{equation}
\end{lemma}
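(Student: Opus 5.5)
We write out $\delta_{z,k}=\vz_{k,\text{swap}}+\vz_k$ using the P-IAF recursion for both the original and the swapped chains. The swapped chain receives $\vc_{d,\text{swap}}=-\vc_d$ and $\vc_{s,\text{swap}}=\vc_s$. This gives
\[
\delta_{z,k}=\mu_k(\vz_{k-1,\text{swap}},-\vc_d)+\mu_k(\vz_{k-1},\vc_d)+\sigma_k(\vz_{k-1,\text{swap}},\vc_s)\odot\vz_{k-1,\text{swap}}+\sigma_k(\vz_{k-1},\vc_s)\odot\vz_{k-1}.
\]
The plan is to add and subtract the ``ideally mirrored'' quantities evaluated at $-\vz_{k-1}$, namely $\mu_k(-\vz_{k-1},-\vc_d)$ and $\sigma_k(-\vz_{k-1},\vc_s)$. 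We then use $\vz_{k-1,\text{swap}}=-\vz_{k-1}+\delta_{z,k-1}$ to split everything into a propagated-mismatch part and two violation parts. These violation parts are exactly $\delta_{\mu,k}(\vc_d)$ and $\delta_{\sigma,k}(\vc_s)$ from Eq.(\ref{eq:swap_violation}), read with the paper's convention that $\vc_{d,\text{swap}}=-\vc_d$ and $\vc_{s,\text{swap}}=\vc_s$.

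\emph{Shift part.} We write
\[
\mu_k(\vz_{k-1,\text{swap}},-\vc_d)+\mu_k(\vz_{k-1},\vc_d)=\bigl[\mu_k(\vz_{k-1,\text{swap}},-\vc_d)-\mu_k(-\vz_{k-1},-\vc_d)\bigr]+\delta_{\mu,k}(\vc_d).
\]
Assumption (A4), applied with the context arguments held equal, bounds the bracket by $L^z_{\mu,k}\|\vz_{k-1,\text{swap}}+\vz_{k-1}\|=L^z_{\mu,k}\|\delta_{z,k-1}\|$.

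\emph{Scale part.} We write
\[
\sigma_k(\vz_{k-1,\text{swap}},\vc_s)\odot\vz_{k-1,\text{swap}}+\sigma_k(\vz_{k-1},\vc_s)\odot\vz_{k-1}
=\sigma_k(\vz_{k-1,\text{swap}},\vc_s)\odot\delta_{z,k-1}+\bigl[\sigma_k(\vz_{k-1},\vc_s)-\sigma_k(\vz_{k-1,\text{swap}},\vc_s)\bigr]\odot\vz_{k-1}.
\]
The first term is bounded by $\rho_k\|\delta_{z,k-1}\|$ using (A3) and $\|a\odot b\|\le\|a\|_\infty\|b\|$. For the bracket we insert $\sigma_k(-\vz_{k-1},\vc_s)$, which splits it into $\delta_{\sigma,k}(\vc_s)$ plus $\sigma_k(-\vz_{k-1},\vc_s)-\sigma_k(\vz_{k-1,\text{swap}},\vc_s)$. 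The Hadamard product of $\delta_{\sigma,k}(\vc_s)$ with $\vz_{k-1}$ is bounded by $\|\delta_{\sigma,k}(\vc_s)\|_\infty\|\vz_{k-1}\|$. The remaining difference is bounded by $L^z_{\sigma,k}\|\delta_{z,k-1}\|$ via (A4), using the sup norm dominated by the Euclidean norm, so it contributes $L^z_{\sigma,k}\|\delta_{z,k-1}\|\,\|\vz_{k-1}\|$.

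Collecting the three $\|\delta_{z,k-1}\|$ coefficients $\rho_k$, $L^z_{\mu,k}$ and $L^z_{\sigma,k}\|\vz_{k-1}\|$, plus the two violation terms, gives Eq.(\ref{eq:piaf-latent-mismatch}) by the triangle inequality.

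The main obstacle is bookkeeping, not analysis. We must choose the add-and-subtract anchors so that the leftover terms match the paper's definitions of $\delta_{\mu,k}$ and $\delta_{\sigma,k}$ exactly. We must also check that the context Lipschitz constants $L^{c_d},L^{c_s}$ never appear. That holds precisely because P-IAF feeds $\mu_k$ only $\vc_d$ and $\sigma_k$ only $\vc_s$, which flip and stay fixed under swapping respectively, so every comparison we make has identical or exactly negated context arguments. We would also note that no cross terms between the two contexts arise. This absence of context terms is the point of contrast with plain IAF, where extra $L^{c}\|\vc-\vc_\text{swap}\|$ terms would remain.
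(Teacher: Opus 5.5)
Your proposal is correct and follows essentially the same route as the paper. Both proofs anchor at $\mu_k(-\vz_{k-1},\vc_{d,\text{swap}})$ and $\sigma_k(-\vz_{k-1},\vc_{s,\text{swap}})$ and obtain the same five pieces: the two violation terms, the $L^z_{\mu,k}$ term, the $\rho_k$ term from (A3), and the $L^z_{\sigma,k}\|\vz_{k-1}\|$ term from (A4). Your grouping of the scale part differs only cosmetically from the paper's insert--delete in its fourth bracket, and your explicit use of $\|\cdot\|_\infty\le\|\cdot\|$ before applying the Lipschitz bound is slightly more careful than the paper, which states that step loosely.
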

\begin{proof}
    In the P-IAF, the outputs from the $k$-th step is given by 
    \[
    \begin{aligned}
        \vz_k&=\mu_k(\vz_{k-1},\vc_d)+\sigma_k(\vz_{k-1},\vc_s)\odot \vz_{k-1},\\
        \vz_{k,\text{swap}}&=\mu_k(\vz_{k-1,\text{swap}},\vc_{d,\text{swap}})+\sigma_k(\vz_{k-1,\text{swap}},\vc_{s,\text{swap}})\odot \vz_{k-1,\text{swap}}.
    \end{aligned}
    \]
    Then, the transformed mismatch $\delta_{z,k}$ at the $k$-th step is given by
    \[
    \begin{aligned}
        \delta_{z,k}
        &=\big[\mu_k(\vz_{k-1},\vc_d)+\mu_k(-\vz_{k-1},\vc_{d,\text{swap}})\big] \\
        &\quad+\big[\sigma_k(\vz_{k-1},\vc_s)\odot \vz_{k-1}+\sigma_k(-\vz_{k-1},\vc_{s,\text{swap}})\odot(-\vz_{k-1})\big]\\
        &\quad+\big[\mu_k(\vz_{k-1,\text{swap}},\vc_{d,\text{swap}})-\mu_k(-\vz_{k-1},\vc_{d,\text{swap}})\big]\\
        &\quad+\big[\sigma_k(\vz_{k-1,\text{swap}},\vc_{s,\text{swap}})\odot \vz_{k-1,\text{swap}}-\sigma_k(-\vz_{k-1},\vc_{s,\text{swap}})\odot(-\vz_{k-1})\big].
    \end{aligned}
    \]
    The first bracket equals $\delta_{\mu,k}(\vc_d)$ by Eq.(\ref{eq:swap_violation}), hence contributes $\|\delta_{\mu,k}(c_d)\|$. 

    For the second bracket, by Eq.(\ref{eq:swap_violation}) and $\|a\odot b\|\le \|a\|_{\infty}\|b\|$:
    \[
    \|\delta_{\sigma,k}(\vc_s)\|_{\infty}\|\vz_{k-1}\|.
    \]
    
    For the third bracket, by (A4) in $\vz$:
    \[
    \norm{\mu_k(\vz_{k-1,\text{swap}},\vc_{d,\text{swap}})-\mu_k(-\vz_{k-1},\vc_{d,\text{swap}})} \le L_{\mu,k}^z\,\norm{\delta_{z,k-1}}.
    \]
    
    For the fourth bracket, insert-delete $\sigma_k(\vz_{k-1,\text{swap}},\vc_{s,\text{swap}})\odot\vz_{k-1}$:
    \[
    \begin{aligned}
        &\sigma_k(\vz_{k-1,\text{swap}},\vc_{s,\text{swap}})\odot(\vz_{k-1,\text{swap}}+\vz_{k-1})\\
        &+\big(\sigma_k(\vz_{k-1,\text{swap}},\vc_{s,\text{swap}})-\sigma_k(-\vz_{k-1},\vc_{s,\text{swap}})\big)\odot(-\vz_{k-1}).
    \end{aligned}
    \]
    Bound the first term using (A3):
    \[
    \norm{\sigma_k(\vz_{k-1,\text{swap}},\vc_{s,\text{swap}})\odot(\vz_{k-1,\text{swap}}+\vz_{k-1})}\ \le\ \rho_k\,\norm{\vz_{k-1,\text{swap}}+\vz_{k-1}}=\rho_k\,\norm{\delta_{z,k-1}}.
    \]
    Bound the second term using (A4) in $\vz$:
    \begin{align*}
        \big(\sigma_k(\vz_{k-1,\text{swap}},\vc_{s,\text{swap}})-\sigma_k(-\vz_{k-1},\vc_{s,\text{swap}})\big)\odot(-\vz_{k-1})
        \le L_{\sigma,k}^z\norm{\delta_{z,k-1}}\norm{\vz_{k-1}}
    \end{align*}
    Collecting the bounds yields Eq.(\ref{eq:piaf-latent-mismatch}).
\end{proof}

\begin{lemma}[Transformed mismatch of IAF]
    \label{lem:4}
    Let us consider a normalizing flow IAF given by 
    \[
    \vz_k=\mu_k(\vz_{k-1},\vc)+\sigma_k(\vz_{k-1},\vc)\odot \vz_{k-1},
    \]
    where $\vc=\vc_d+\vc_s$.
    Then, the mismatch $\delta_{z,k}$ of IAF is bounded by
    \begin{equation}
        \label{eq:iaf-latent-mismatch}
        \begin{aligned}
            \|\delta_{z,k}\|
            \ & \le\ 
            (\rho_k+L_{\mu,k}^z+L_{\sigma,k}^z\|\vz_{k-1}\|)\,\|\delta_{z,k-1}\|\\
            &+\ \underbrace{\|\delta_{\mu,k}(\vc_d)\|}_{\textbf{swap-reversal violation}\,(\mu)}
            \ +\ \underbrace{2L^{c_s}_{\mu,k}\|\vc_s\|}_{\color{teal}{\textbf{leak }(\vc_s\,\to\,\mu)}}
            +\ \underbrace{\|\delta_{\sigma,k}(\vc_s)\|_{\infty}\,\|\vz_{k-1}\|}_{\textbf{swap-invariant violation}\,(\sigma)}
            \ +\ \underbrace{2L^{c_d}_{\sigma,k}\|\vc_d\|\,\|\vz_{k-1}\|}_{\color{teal}{\textbf{leak }(\vc_d\,\to\,\sigma)}}.
        \end{aligned}
    \end{equation}
\end{lemma}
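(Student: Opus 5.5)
The plan is to mirror the proof of Lemma~\ref{lem:3} exactly, with one change: the standard IAF feeds the full context $\vc=\vc_d+\vc_s$ to both $\mu_k$ and $\sigma_k$. The swapped context is $\vc_{\text{swap}}=-\vc_d+\vc_s$, and this is not simply a sign flip of $\vc$. I therefore write $\mu_k(\vz,\vc)$ and $\sigma_k(\vz,\vc)$ as functions of the pair $(\vc_d,\vc_s)$, as in (A4). Then I compare against ``ideal'' evaluations in which each network sees only the channel it is meant to see. The leak terms measure the price of that comparison.

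First, I write $\vz_k$ and $\vz_{k,\text{swap}}$ with arguments $(\vz_{k-1},\vc_d,\vc_s)$ and $(\vz_{k-1,\text{swap}},-\vc_d,\vc_s)$. I then decompose $\delta_{z,k}$ into the same four brackets as in Lemma~\ref{lem:3}, inserting and deleting $\mu_k(-\vz_{k-1},-\vc_d,\vc_s)$ and $\sigma_k(-\vz_{k-1},-\vc_d,\vc_s)\odot(-\vz_{k-1})$.

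The third bracket (the $\mu$ difference at $\vz_{k-1,\text{swap}}$ versus $-\vz_{k-1}$) is bounded verbatim by $L^z_{\mu,k}\|\delta_{z,k-1}\|$. The fourth bracket is split by the same insert–delete step as before, giving $\rho_k\|\delta_{z,k-1}\|$ from (A3) plus $L^z_{\sigma,k}\|\vz_{k-1}\|\|\delta_{z,k-1}\|$ from (A4). Together these reproduce the coefficient $(\rho_k+L^z_{\mu,k}+L^z_{\sigma,k}\|\vz_{k-1}\|)$.

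The new work is in the first two brackets. The first is $\mu_k(\vz_{k-1},\vc_d,\vc_s)+\mu_k(-\vz_{k-1},-\vc_d,\vc_s)$. I compare it with the swap-reversal violation $\delta_{\mu,k}(\vc_d)$, which I interpret as the same expression with the $\vc_s$ channel removed: $\mu_k(\vz_{k-1},\vc_d,0)+\mu_k(-\vz_{k-1},-\vc_d,0)$, matching the P-IAF shift that sees only $\vc_d$. Adding and subtracting these two terms and applying (A4) in the $\vc_s$ argument to each summand gives an extra $L^{c_s}_{\mu,k}\|\vc_s\|$ twice. That yields the leak $2L^{c_s}_{\mu,k}\|\vc_s\|$.

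The second bracket is handled symmetrically. I factor it as $[\sigma_k(\vz_{k-1},\vc_d,\vc_s)-\sigma_k(-\vz_{k-1},-\vc_d,\vc_s)]\odot\vz_{k-1}$ and compare with $\delta_{\sigma,k}(\vc_s)$, meaning the same expression with the $\vc_d$ channel zeroed. The two (A4) corrections in $\vc_d$ each cost $L^{c_d}_{\sigma,k}\|\vc_d\|$. Using $\|a\odot b\|\le\|a\|_\infty\|b\|\le\|a\|\|b\|$, this gives $\|\delta_{\sigma,k}(\vc_s)\|_\infty\|\vz_{k-1}\|+2L^{c_d}_{\sigma,k}\|\vc_d\|\|\vz_{k-1}\|$. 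Summing all pieces with the triangle inequality yields Eq.~(\ref{eq:iaf-latent-mismatch}).

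The main obstacle is making the comparison objects precise. $\delta_{\mu,k}$ and $\delta_{\sigma,k}$ were defined in Eq.~(\ref{eq:swap_violation}) for networks that take a single context argument. To state the leak cleanly, I must fix a convention identifying the IAF network restricted to one channel (the other set to zero) with the P-IAF network. I must also check that this reference point lies in the ``valid input space'' where (A4) holds. A norm-compatibility issue also appears: the leak in $\sigma$ is naturally bounded in $\|\cdot\|_2$, so I pass from $\|\cdot\|_\infty$ to $\|\cdot\|_2$, which only loosens the bound. Once that convention is fixed, every other step is a routine triangle inequality with (A3) and (A4).
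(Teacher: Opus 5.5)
Your proposal is correct and follows the paper's proof essentially step for step. The paper likewise reuses the four-bracket decomposition of Lemma~\ref{lem:3}, identifies $\delta_{\mu,k}(\vc_d)$ and $\delta_{\sigma,k}(\vc_s)$ with the IAF networks evaluated with the other context channel set to zero, and applies (A4) once to each of the two summands per network to get the factor-$2$ leak terms; your explicit treatment of the $\|\cdot\|_\infty$ versus $\|\cdot\|$ mismatch in the $\sigma$ term is slightly more careful than the paper, which states that step only in the Euclidean norm.
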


\begin{proof}
The derivation mirrors the proof in Eq.(\ref{eq:piaf-latent-mismatch}) except for two aspects. 
First, we replace $\delta_{\mu,k}(\vc_d)$ and $\delta_{\sigma,k}(\vc_s)$ with $\delta_{\mu,k}(\vc)$ and $\delta_{\sigma,k}(\vc)$ respectively using definition Eq.(\ref{eq:swap_violation}).
Decompose these terms via insert–delete step:
\begin{align*}
\delta_{\mu,k}(\vc)
&=\underbrace{\mu_k(\vz_{k-1},\vc_d,0)+\mu_k(-\vz_{k-1},\vc_{d,\text{swap}},0)}_{\delta_{\mu,k}(\vc_d)}\\
&+\underbrace{\big[\mu_k(\vz_{k-1},\vc_d,\vc_s)-\mu_k(\vz_{k-1},\vc_d,0)\big]}_{\Delta_1^{(s)}}
+\underbrace{\big[\mu_k(-\vz_{k-1},\vc_{d,\text{swap}},\vc_{s,\text{swap}})-\mu_k(-\vz_{k-1},\vc_{d,\text{swap}},0)\big]}_{\Delta_2^{(s)}},\\
\delta_{\sigma,k}(\vc)
&=\underbrace{\sigma_k(\vz_{k-1},0,\vc_s)-\sigma_k(-\vz_{k-1},0,\vc_{s,\text{swap}})}_{\delta_{\sigma,k}(\vc_s)}\\
&+\underbrace{\big[\sigma_k(\vz_{k-1},\vc_d,\vc_s)-\sigma_k(\vz_{k-1},0,\vc_s)\big]}_{\Delta_1^{(d)}}
+\underbrace{\big[\sigma_k(-\vz_{k-1},0,\vc_{s,\text{swap}})-\sigma_k(-\vz_{k-1},\vc_{d,\text{swap}},\vc_{s,\text{swap}})\big]}_{\Delta_2^{(d)}}.
\end{align*}

For the $\Delta$ terms, by the (A4),
\begin{align*}
\|\Delta_1^{(s)}\|\le L^{c_s}_{\mu,k}\|\vc_s\|,\quad
\|\Delta_2^{(s)}\|\le L^{c_s}_{\mu,k}\|\vc_s\|,\\
\|\Delta_1^{(d)}\|\le L^{c_d}_{\sigma,k}\|\vc_d\|,\quad
\|\Delta_2^{(d)}\|\le L^{c_d}_{\sigma,k}\|\vc_d\|.
\end{align*}
Therefore, we obtain
\begin{gather*}
    \|\delta_{\mu,k}(\vc)\|\ \le\ \|\delta_{\mu,k}(\vc_d)\|+2\,L^{c_s}_{\mu,k}\,\|\vc_s\|,\\
    \|\delta_{\sigma,k}(\vc)\|\ \le\ \|\delta_{\sigma,k}(\vc_s)\|+2\,L^{c_d}_{\sigma,k}\,\|\vc_d\|.
\end{gather*}
Collecting the bounds yields Eq.(\ref{eq:iaf-latent-mismatch}).
\end{proof}

\paragraph{Bound-Level Comparison between P-IAF and IAF}
Assume (A1)–(A4) hold and that P-IAF and IAF share the same architecture and training hyperparameters so that they admit the same upper bounds on the local Lipschitz constants
\(\{L_{\mu,k}^z,L_{\mu,k}^{c_d},L_{\mu,k}^{c_s},L_{\sigma,k}^z,L_{\sigma,k}^{c_d},L_{\sigma,k}^{c_s}\}\),
the same scale bounds \(\rho_k\), the same reward Lipschitz constant \(L_r\), and the same initial mismatch \(\|\delta_{z,0}\|\).
By Lemma~\ref{lem:3} and Lemma~\ref{lem:4}, the IAF per-step bound contains two additional non-negative leak terms,
\(2L_{\mu,k}^{c_s}\,\|\vc_s\|\) and \(2L_{\sigma,k}^{c_d}\,\|\vc_d\|\,\|\vz_{k-1}\|\),
that are absent in P-IAF.
Consequently, by induction over $K$ starting from Lemma~\ref{lem:2},
\begin{equation*}
  \operatorname{UB}\!\big(\|\delta_{z,K}\|\big)^{\text{(P-IAF)}}
  \ \le\
  \operatorname{UB}\!\big(\|\delta_{z,K}\|\big)^{\text{(IAF)}},
\end{equation*}
where \(\operatorname{UB}(\cdot)\) denotes the upper bound under the shared constants above.

Suppose in addition that the reward violation $\delta_{r,K}$ admits a common bound across the two flows, i.e.,
\(\delta_{r,K}^{\text{(P-IAF)}}\le C_r\) and \(\delta_{r,K}^{\text{(IAF)}}\le C_r\) for some \(C_r\ge0\).
Combining this with Lemma~\ref{lem:1}, yields the following bound-level comparison.
\begin{equation*}
  \operatorname{UB}\!\big(|\delta_p|\big)^{\text{(P-IAF)}}
  \le \operatorname{UB}\!\big(|\delta_p|\big)^{\text{(IAF)}}.
\end{equation*}
Thus, P-IAF attains a tighter bound on \(|\delta_p|\) than the IAF bound.

\paragraph{Remarks}
(i) Swap-guided base regularization to encoder output reduces $\|\vmu+\vmu_{\text{swap}}\|$ and $\|\boldsymbol{\ell}-\boldsymbol{\ell}_{\text{swap}}\|$, thereby directly decreasing the expected base mismatch in Eq.(\ref{eq:base_mismatch_final}).
(ii) P-IAF’s swap-guided split $(\vc_d\,\to\,\mu_k,\ \vc_s\,\to\,\sigma_k)$ eliminates leak terms in a shared context, tightening the swap-probability error bound.

\section{Details of Adaptive Latent Conditioning}
\label{app:decoder}
We detail the adaptive latent conditioning applied in the decoder. 
As illustrated in Fig.~\ref{fig:spl_decoder}, the user-latent $\vz$ is mapped to a scale $\boldsymbol{\gamma}$ and shift $\boldsymbol{\beta}$ that modulate the incoming tokenized prompt-response embedding $\ve$ in a FiLM-style manner. 
Concretely, the decoder computes a latent conditioned representation by applying dimension-wise scaling and shifting to $\ve$ using $\boldsymbol{\gamma}$ and $\boldsymbol{\beta}$, respectively. 
\begin{figure}[ht]
  \centering
  \begin{subfigure}[ht]{0.27\textwidth} 
    \centering
    \includegraphics[width=\linewidth]
    {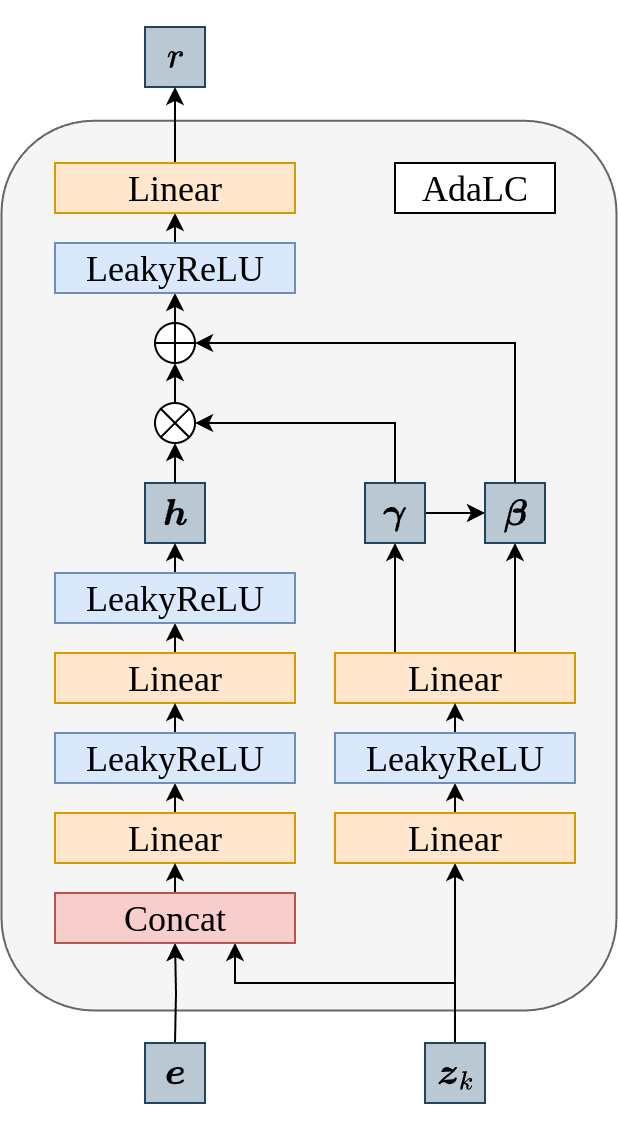}
    \caption{\abbr decoder block}
    \label{fig:spl_decoder}
  \end{subfigure}
  \begin{subfigure}[ht]{0.66\textwidth} 
    \centering
    \includegraphics[width=\linewidth]
    {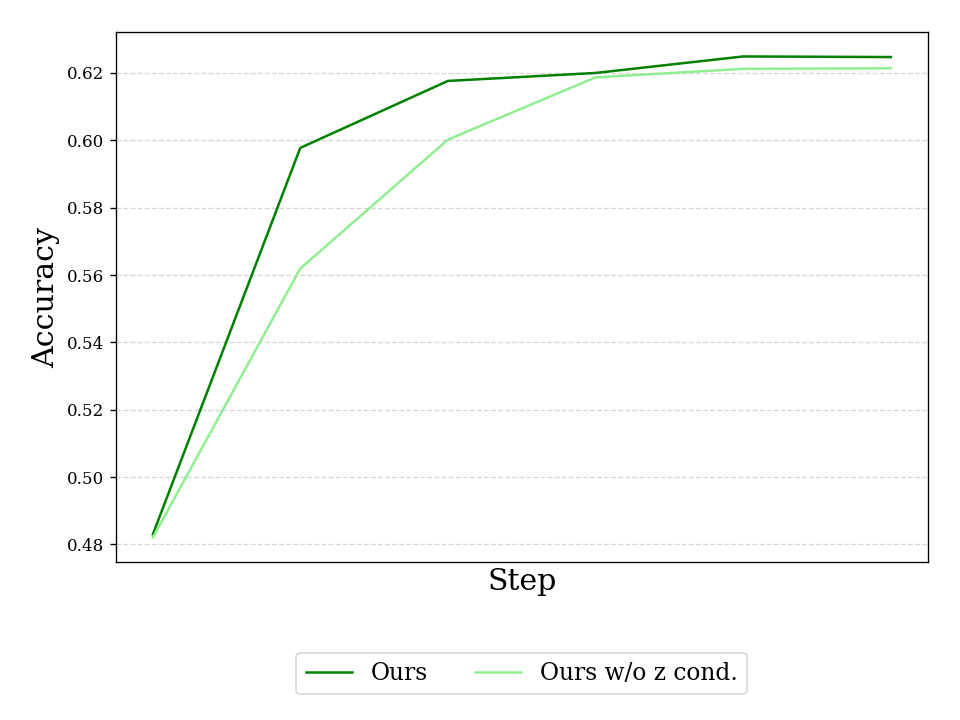}
    \caption{Validation accuracy during training}
    \label{fig:decoder_ablation}
  \end{subfigure}
  \caption{Details and training dynamics of the \abbr decoder}
  \label{fig:decoder}
\end{figure}

Empirically, we also observe a training acceleration effect from adaptive latent conditioning. 
Fig.~\ref{fig:decoder_ablation} reports preference-prediction accuracy evaluated periodically during training on \emph{UF-P-4} with \emph{Llama-3.1-8B}. 
The curve indicates that adaptive latent conditioning improves early-stage accuracy, suggesting that the reward model captures preferences more quickly with fewer samples.
This is beneficial in data-scarce settings with minority preferences.

When the user-latent encodes preference information clearly (i.e., is low-uncertainty), the decoder leverages it via the modulation to personalize the reward. 
When the latent is uncertain or uninformative, the decoder naturally reduces the effective contribution of $\vz$, behaving closer to the base model. 
This adaptability ensures robustness across users with different feedback levels and consistency, while allowing strong personalization when reliable signals are available.

\section{Additional Experiments}
\label{app:additional_experiments}
\paragraph{P-IAF vs. IAF}
We empirically test whether P-IAF yields better encoding performance than a standard IAF.
To this end, we introduce two variants:
\begin{itemize}
    \item \textbf{VPL-IAF}: An extension of VPL with a basic IAF posterior, used to examine the effect of a basic multi-modal posterior within a variational framework.
    
    \item \textbf{\abbr-IAF}: Identical to \abbr but replacing P-IAF with a standard IAF, serving as an ablation to evaluate the contribution of P-IAF.
\end{itemize}

These variants are not based on prior work; we design them specifically to compare our P-IAF with an IAF-based multi-modal posterior and to quantify the performance gains from P-IAF.

\begin{table}[H]
\caption{Preference-prediction accuracy (\%)}
\label{tab:result_acc_iaf}
\centering
\begin{tabular}{llcc}
\toprule
Model & Method & UF-P-2 & UF-P-4 \\
\midrule
\multirow{4}{*}{Llama-3.2-3B} 
& VPL & 62.37 $\pm$ 0.15 & \cellcolor{collapse}57.03 $\pm$ 0.10 \\
& VPL-IAF & \cellcolor{collapse}62.35 $\pm$ 0.11 & 58.01 $\pm$ 0.43 \\
& \abbr-IAF & 63.09 $\pm$ 0.14 & 59.50 $\pm$ 0.07 \\
& \abbr (Ours)& \textbf{63.28 $\pm$ 0.13} & \textbf{61.56 $\pm$ 0.03} \\
\midrule
\multirow{4}{*}{Llama-3.1-8B}
& VPL & 62.66 $\pm$ 0.23 & \cellcolor{collapse}57.14 $\pm$ 0.05 \\
& VPL-IAF & 62.82 $\pm$ 0.13 & 58.65 $\pm$ 0.09 \\
& \abbr-IAF & 63.26 $\pm$ 0.09 & 60.56 $\pm$ 0.34 \\
& \abbr (Ours) & \textbf{63.71 $\pm$ 0.18} & \textbf{62.21 $\pm$ 0.06} \\
\bottomrule
\end{tabular}
\end{table}

Simply adding IAF to VPL (VPL-IAF) does not yield robust encodings and often fails to prevent collapse.
Likewise, replacing the P-IAF component in \abbr with IAF (\abbr-IAF) noticeably reduces accuracy. 
Nevertheless, \abbr-IAF still achieves higher accuracy than VPL-IAF. 
Collectively, these results show that swap-guided base regularization and P-IAF effectively encode user preferences into identifiable user-latent variables.

\begin{figure*}[h]         
  \centering
  \begin{subfigure}[t]{0.24\textwidth} 
    \centering
    \includegraphics[width=\linewidth]
    {fig/vpl_p4_z_umap_lowreg.png}
    \caption{VPL}
    \label{fig:vpl_p4_8b_umap}
  \end{subfigure}
  \begin{subfigure}[t]{0.24\textwidth} 
    \centering
    \includegraphics[width=\linewidth]
    {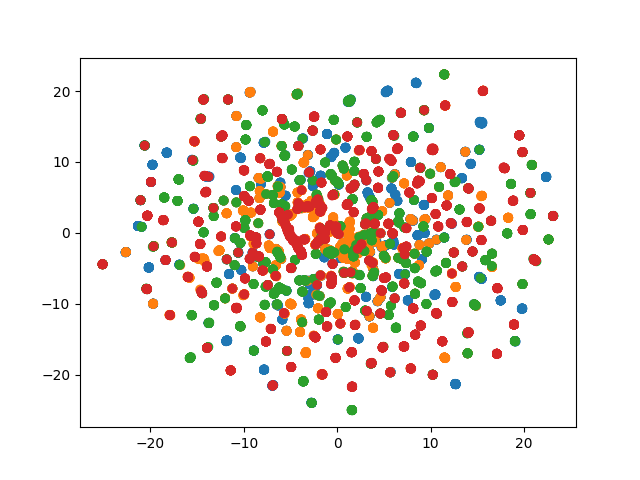}
    \caption{VPL-IAF}
    \label{fig:vpl-iaf_p4_8b_umap}
  \end{subfigure}
  \begin{subfigure}[t]{0.24\textwidth}
    \centering
    \includegraphics[width=\linewidth]
    {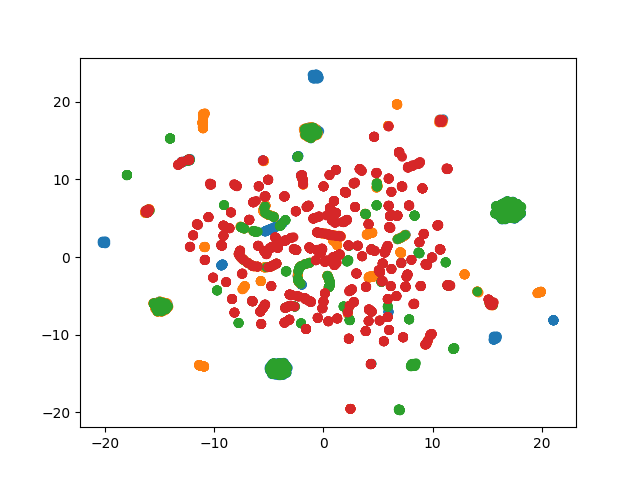}
    \caption{SPL-IAF}
    \label{fig:spl-iaf_p4_8b_umap}
  \end{subfigure}
  \begin{subfigure}[t]{0.24\textwidth}
    \centering
    \includegraphics[width=\linewidth]
    {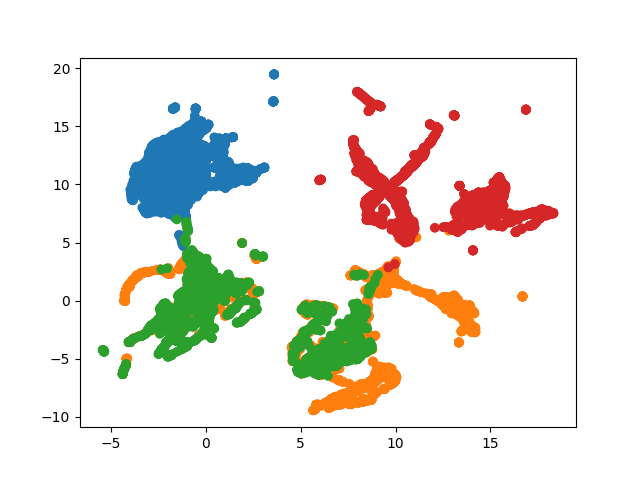}
    \caption{\abbr}
    \label{fig:spl_p4_8b_umap}
  \end{subfigure}
  \caption{\textbf{Latent embeddings learned on the \emph{UF-P-4} dataset.} We visualize 2D UMAP of latent embeddings $\vz$ from VPL, \abbr, and their IAF variants (VPL-IAF and \abbr-IAF), using Llama-3.1-8B to compare P-IAF against a standard IAF.}
  \label{fig:latent_umap}
\end{figure*}

Fig.~\ref{fig:latent_umap} visualizes the 2D UMAP projections of latent embeddings $\vz$ on the \emph{UF-P-4} dataset.
For VPL, the embeddings collapse into a single, non-identifiable cluster across preference types.
For VPL-IAF, the embeddings do not collapse but remain scattered, concentrating in several dense regions.
\abbr-IAF likewise prevents collapse, with slightly lower accuracy than \abbr. 
Its embeddings are less scattered than VPL-IAF while still constructing a complex posterior.
In contrast, \abbr most clearly separates preference types in latent space.
Together, these observations suggest that a standard IAF allocates its modeling capacity to complex, unconstrained transformations of the latent space rather than to swap-derived structure.
In contrast, our P-IAF preserves the expressivity of IAF while constraining it through swap-guided encoding, reducing unnecessary complexity and yielding more identifiable user-latent embeddings.

\paragraph{Effect of Components}
We evaluate the effects of (i) base regularization $\mathcal L_{\text{guide}}$, (ii) P-IAF, and (iii) adaptive latent conditioning through an ablation on \emph{UF-P-4} with \emph{Llama-3.1-8B}.
Table~\ref{tab:ablation} summarizes results.
\begin{table}[H]
\centering
\caption{Effect of each component}
\label{tab:ablation}
\begin{tabular}{ccccc}
\toprule
$\mathcal L_{\text{guide}}$ & P-IAF & $z$ cond. & Acc. [\%] & Active Units [\%] \\
\cmidrule(lr){1-3}\cmidrule(lr){4-5}

\checkmark & & & \cellcolor{collapse}57.18 & \cellcolor{collapse}0.00 \\
 & \checkmark & & 59.10 & 11.03 \\
  &  & \checkmark & \cellcolor{collapse}56.95 & \cellcolor{collapse}0.00 \\
\checkmark &  & \checkmark & \cellcolor{collapse}56.87 & \cellcolor{collapse}0.00 \\
\checkmark & \checkmark &  & 62.14 & 94.24 \\
 & \checkmark & \checkmark & 62.08 & 93.07 \\
\checkmark & \checkmark & \checkmark & \textbf{62.21} & \textbf{96.19} \\
\bottomrule
\end{tabular}
\end{table}

First, regarding the swap-guided base regularization ($\mathcal{L}_\text{guide}$), it was originally designed to guide P-IAF to learn an effective multi-modal posterior. 
Thus, the base regularization used alone without P-IAF, its effect naturally becomes modest because the model continues to generate a unimodal Gaussian posterior, which limits its ability to capture complex preference patterns. 
However, when combined with P-IAF, its effect is greatly enhanced: accuracy increases from 59.10\% to 62.14\%, and active units from 11.03\% to 94.24\%.
We further evaluate swap-guided base regularization for preference encoding on the \emph{UF-P-4} dataset using Llama-3.2-3B, comparing it with the baselines.
The results are summarized in Table~\ref{tab:sgbr_acc}.

\begin{table}[H]
\centering
\caption{Effect of the swap-guided base regularization}
\label{tab:sgbr_acc}
\setlength{\tabcolsep}{6pt}
\begin{tabular}{llcc}
\toprule
Model & Method & Accuracy [\%] & Active Units [\%] \\
\midrule
\multirow{5}{*}{Llama-3.2-3B}
& BTL & 57.07 & -\\
& VPL & \cellcolor{collapse}57.03 & \cellcolor{collapse}0.00 \\
& VPL w/ $\mathcal L_{guide}$ & 57.56 & 45.61 \\
& VPL-IAF & 58.01 & 39.45 \\
& \abbr\ (Ours) & \textbf{61.56} & \textbf{82.32} \\
\bottomrule
\end{tabular}
\end{table}

The performance of VPL with base regularization lies between that of the original VPL and VPL-IAF.
This shows that swap-guided base regularization improves VPL accuracy.
Although improvement is smaller than that achieved by the IAF, this outcome is reasonable because the base regularization alone still produces a unimodal Gaussian posterior, while the IAF generates a multi-modal posterior. 
A unimodal posterior has limited capacity to capture complex preference structures, whereas a multi-modal posterior can represent richer and more diverse patterns.
However, our design goal was not for swap-guided base regularization to serve as a strong preference encoder on its own, but rather to guide the construction of a multi-modal posterior in P-IAF by leveraging the structure of preference pair data.
Nevertheless, the gains observed in some settings suggest that base regularization can also be effective as a stand-alone method for preference encoding.

Second, regarding adaptive latent conditioning, its primary role, as discussed in Appendix~\ref{app:decoder}, is to accelerate training and provide more stable optimization. 
Beyond these benefits, adaptive latent conditioning also helps models become more robust to noisy preference labels.
To demonstrate this, we also evaluate robustness to noise, where the annotator occasionally selects the opposite or unrelated label. 
This type of noise can frequently arise in real-world settings.
Concretely, on the \emph{UF-P-4} dataset, we construct a noisy variant where 75\% of preference pairs are unchanged and 25\% are flipped, so the originally preferred response becomes less preferred.
Table~\ref{tab:alc_with_noise} shows results for Llama-3.2-3B.

\begin{table}[H]
\centering
\caption{\abbr accuracy and active units with noise on UF-P-4}
\label{tab:alc_with_noise}
\setlength{\tabcolsep}{6pt}
\begin{tabular}{llcc}
\toprule
Model & Method & Accuracy [\%] & Active Units [\%] \\
\midrule
\multirow{4}{*}{Llama-3.2-3B}
& \abbr & \textbf{61.41} & \textbf{82.62} \\
& \abbr w/o z cond. & 58.08 & 80.37 \\
& \abbr w/o $\mathcal L_{\text{guide}}$, z cond. & 56.92 & 0.00 \\
& VPL & 56.98 & 0.00 \\
\bottomrule
\end{tabular}
\end{table}

The results show that adaptive latent conditioning is essential for training noise-robust models.
In the noisy setting, \abbr achieves nearly the same preference-prediction accuracy as in the noise-free setting.
Without adaptive latent conditioning, posterior collapse does not occur, but accuracy drops substantially.
Moreover, if we additionally remove base regularization, the model exhibits collapse.
For VPL, collapse occurs regardless of label noise.
These findings show that swap-guided base regularization and adaptive latent conditioning together enable robust learning under noisy user feedback, making them essential for real-world settings with inherently noisy user choices.
Across all ablations, these experiments show that combining these modules prevents collapse, produces informative user-latent variables, and achieves the best overall performance.

\paragraph{Effect of P-IAF Depth}
We explore the effect of P-IAF depth $K$ on \abbr. 
Each step updates the entire latent vector, allowing P-IAF to model high-dimensional structures using fewer steps.
Using this property, we limit the range to shallow stacks and evaluate $K\in\{1,2,4\}$.
Table~\ref{tab:ablation_step} indicates that $K=2$ yields the best performance.
Thus, $K=2$ is our default for all experiments. 
A single step prevents collapse and improves accuracy. 
With $K=4$, performance drops, suggesting unnecessary expressivity reduces preference-prediction accuracy, similar to standard IAF.

\begin{table}[H]
\centering
\caption{Effect of P-IAF depth $K$ on UF-P-4}
\label{tab:ablation_step}
\begin{tabular}{ccc}
\toprule
$K$ & Accuracy [\%] & Active Units [\%]\\
\cmidrule(lr){1-1}\cmidrule(lr){2-3}
1 & 60.58 & 91.60 \\
2 & \textbf{62.21} & \textbf{96.19} \\
4 & 61.92 & 93.16 \\
\bottomrule
\end{tabular}
\end{table}

\paragraph{Preference Learning with fewer preference pairs}
We evaluate preference learning on the \emph{UF-P-4} dataset when only a few preference pairs are provided to the model.
Specifically, compared to the default setting, we randomly supply fewer pairs ($n\in\{2,3,4\}$) to \emph{Llama-3.2-3B} and measure preference-prediction accuracy. 
As summarized in Table~\ref{tab:less_pairs_acc}, \abbr effectively encodes user preferences even under such limited preference signal. 
By contrast, VPL mitigates collapse with fewer pairs but captures user preferences poorly, resulting in accuracy similar to standard RLHF.

\begin{table}[H]
\centering
\caption{Accuracy and active units with fewer preference pairs}
\label{tab:less_pairs_acc}
\setlength{\tabcolsep}{6pt}
\begin{tabular}{llcc}
\toprule
Model & Method & Accuracy [\%] & Active Units [\%] \\
\midrule
\multirow{3}{*}{Llama-3.2-3B}
& BTL & 56.94 & - \\
& VPL & 56.92 & 31.35 \\
& \abbr\ (Ours) & \textbf{58.12} & \textbf{61.13} \\
\bottomrule
\end{tabular}
\end{table}

\paragraph{Hyperparameter ablations}
We study the effect of the guidance loss hyperparameters $\lambda$ and $\eta$.
The coefficient $\lambda$ controls the overall strength of the swap-guided base regularization term in Eq.~(\ref{eq:guidance_loss}), while $\eta$ balances the mean and standard deviation components within this loss.
Their effects on preference-prediction accuracy on \emph{UF-P-4} using Llama-3.2-3B are summarized in Tables~\ref{tab:ablation_lambda} and~\ref{tab:ablation_eta}.
\begin{table}[H]
\centering
\caption{Effect of guidance loss weight $\lambda$}
\label{tab:ablation_lambda}
\begin{tabular}{ccc}
\toprule
$\lambda$ & Accuracy [\%] & Active Units [\%]\\
\cmidrule(lr){1-1}\cmidrule(lr){2-3}
$1.0 \times 10^{-3}$ & 61.19 $\pm$ 0.40 & 81.05 $\pm$ 4.98 \\
$1.0 \times 10^{-4}$ & 60.90 $\pm$ 0.20 & 79.33 $\pm$ 0.69 \\
$1.0 \times 10^{-5}$ & \textbf{61.56 $\pm$ 0.03} & \textbf{82.32 $\pm$ 3.18} \\
$1.0 \times 10^{-6}$ & 59.74 $\pm$ 0.09 & 72.01 $\pm$ 1.62 \\
w/o $\mathcal L_{guide}$ & 58.53 $\pm$ 0.13 & 78.58 $\pm$ 1.59 \\
\bottomrule
\end{tabular}
\end{table}

\begin{table}[H]
\centering
\caption{Effect of guidance balancing weight $\eta$}
\label{tab:ablation_eta}
\begin{tabular}{ccc}
\toprule
$\eta$ & Accuracy [\%] & Active Units [\%]\\
\cmidrule(lr){1-1}\cmidrule(lr){2-3}
$0.10$ & \textbf{61.56 $\pm$ 0.03} & 82.32 $\pm$ 3.18 \\
$0.25$ & 61.49 $\pm$ 0.11 & 83.11 $\pm$ 2.50 \\
$0.50$ & 61.18 $\pm$ 0.18 & 78.22 $\pm$ 0.98 \\
$1.00$ & 61.44 $\pm$ 0.06 & \textbf{83.50 $\pm$ 2.98} \\
\bottomrule
\end{tabular}
\end{table}

These results show that $\lambda=1.0\times10^{-5}$ and $\eta=0.1$ achieve the best performance.
While $\eta$ affects accuracy only minimally, $\lambda$ has a more notable influence, which is expected since it directly controls the contribution of the guidance loss during optimization. 

Table~\ref{tab:result_pc} shows that \abbr is robust across a wide range of $\beta$ values, and Tables~\ref{tab:ablation_lambda} and~\ref{tab:ablation_eta} indicate that coarse manual tuning of $\lambda$ and $\eta$ is sufficient to obtain stable performance.
Thus, the overall tuning effort for \abbr is no greater than that of standard preference learning methods.

\section{Implementation Details}
\label{app:implementation}
\subsection{Hyperparameter settings}
We detail the hyperparameters used in our experiments. Table~\ref{tab:llm_data_hyperparam} specifies the settings for generating the \emph{Pets} and \emph{UF-P} datasets.
Table~\ref{tab:llm_hyperparam} specifies the training and evaluation hyperparameters.
All experiments were run on a single NVIDIA RTX 4090 GPU and completed within two days.
\begin{table}[H]
\caption{Hyperparameters for data generation}
\label{tab:llm_data_hyperparam}
\centering
    \begin{tabular}{ll}
    \hline
    \textbf{Hyperparameter}     & \textbf{Value}                      \\ \hline
    Token embedding dimension   & 3072 (\emph{Llama-3.2-3B-instruct}), 4096 (\emph{Llama-3.1-8B-instruct})   \\
    Max length                  & 1024   \\
    Max preference pairs per sample $n$  & 8 \\
    survey size for \emph{UF-P}  & 16 \\
    Token data type & bfloat16 \\
    Training samples in dataset & 4,000 (for \emph{Pets}), 55,636 (for \emph{UF-P-2}), 111,272 (for \emph{UF-P-4})\\
    Evaluation samples in dataset & 400 (for \emph{Pets}), 6,042 (for \emph{UF-P-2}), 12,084 (for \emph{UF-P-4})\\
    \hline
    \end{tabular}
\end{table}

\begin{table}[H]
\caption{Hyperparameters for experiments}
\label{tab:llm_hyperparam}
\centering
    \begin{tabular}{ll}
    \hline
    \textbf{Hyperparameter}     & \textbf{Value}                      \\ \hline
    Encoder input dimension   & 3072 (\emph{Llama-3.2-3B-instruct}), 4096 (\emph{Llama-3.1-8B-instruct})   \\
    Latent dimension $d$        & 1024   \\
    Learning rate               & 1.0 $\times$ 10$^{-4}$           \\
    Learning rate scheduler     & cosine with 3\% warm-up steps        \\
    Epoch                       & 2 \\
    P-IAF flow step $K$         & 2 \\
    Batch size                  & 32 (for \emph{Pets}), 64 (for \emph{UF-P})        \\
    Optimizer                   & AdamW(with weight decay = 0.001)    \\
    KL Divergence weight $\beta$ & $1.0 \times 10^{-4}$ (for \emph{Pets}), $3.0 \times 10^{-6}$ (for \emph{UF-P}) \\
    KL annealing scheduler & cosine cyclical from 0 to 1 (period = 10,000 steps) \\
    Guidance loss weight $\lambda$ & $1.0 \times 10^{-5}$ \\
    Guidance balancing weight $\eta$ & $0.1$ \\
    Active units threshold $\delta$ & $0.005$ \\
    \hline
    \end{tabular}
\end{table}

\subsection{Algorithms}
\label{appendix:algos}

\begin{algorithm}[h!]
\caption{Swap-guided Preference Learning (\abbr)}
\begin{algorithmic}[1]
\REQUIRE Preference Data $\sD=\{\sD_{h_1},\cdots,\sD_{h_m}\}$
\REQUIRE Encoder $q_\psi$, $K$-step P-IAF $F_{K_\psi}$, Reward Model $r_\phi$, prior $p(\vz_k)$
\WHILE{not done}
    \STATE Sample $\sD_h = \{x^i, y_w^i, y_l^i\}_{i=1}^n \sim \sD$
    \STATE Tokenize $\ve_{(\cdot)}^i =\text{LLM}^\text{SFT}(x^i, y_{(\cdot)}^i)$
    \STATE Compute $\vmu, \boldsymbol{\ell}, \vc= q_\psi(\{\ve_w^i, \ve_l^i\}_{i=1}^n)$, $\vmu_\text{swap}, \boldsymbol{\ell}_\text{swap}, \vc_{\text{swap}}= q_\psi(\{\ve_l^i, \ve_w^i\}_{i=1}^n)$
    \STATE Sample $\vz_0 \sim \mathcal{N}(\vmu, \boldsymbol{\ell})$
    \STATE Compute $\vc_d = \tfrac{1}{2}(\vc-\vc_\text{swap}), \vc_s = \frac{1}{2}(\vc+\vc_\text{swap})$
    \STATE Compute $\vz_K = F_{K_\psi}(\vz_0, \vc_d, \vc_s)$
    \STATE Compute rewards: $r_w = r_\phi(\ve_w, \vz_K)$ and $r_l = r_\phi(\ve_l, \vz_K)$
    \STATE Compute reconstruction loss: $\mathcal{L}_{\text{recon}} = -\log (\sigma(r_w - r_l))$
    \STATE Compute KL-loss: $\mathcal{L}_\text{KL} = \beta\cdot\KL\bigl(\log q_\psi(\vz_k \mid \sD_h) \Vert p(\vz_K)\bigr)$
    \STATE Compute guidance loss: $\mathcal{L}_\text{guide} = \lambda \cdot \big[\tfrac{1}{2}\big(1 + \cos(\vmu, \vmu_\text{swap})\big) + \eta\ \tfrac{1}{2}\big(1 - \cos(\boldsymbol{\ell}, \boldsymbol{\ell}_\text{swap})\big)\big]$ 
    \STATE Compute total loss: $\mathcal{L}_{\text{total}} = \mathcal{L}_\text{recon} + \mathcal{L}_\text{KL} + \mathcal{L}_\text{guide}$
    \STATE Update $E$, $q_\psi$, $F_{K_\psi}$ and $r_\phi$ by optimizing $\mathcal{L}_{\text{total}}$
\ENDWHILE
\label{algo:ours}
\end{algorithmic}
\end{algorithm}

\section{Potential and Social Effects}
We address the tendency of standard RLHF to bias rewards toward majority preferences by encoding a user preference from a small number of comparisons and conditioning the policy on a user-latent, yielding a personalized policy $\pi_{\theta}(y \mid x, \vz_k)$. 
We post-train the policy with
\begin{equation}
\label{eq:policy_training}
\max_{\pi_{\theta}} \expect{h\sim\sH} \Big[\expect{\substack{x\sim \sD \\ y\sim \pi_{\theta}(y \mid x)\\\vz_k \sim q_\psi(\vz_k|\sD_h)}}\bigl[r_{\phi}(x, y, \vz_k)\bigr] - \beta \KL \bigl[\pi_{\theta}(y\mid x, \vz_k)\mid \mid \pi^{\text{SFT}}(y\mid x)\bigr]\Big],
\end{equation}
which trains and deploys distinct behaviors conditioned on $\vz_k$ inferred from the user’s own choices. 
This differs from approaches that keep a single global policy and simply change the input $x$ via a user context: here, the conditional policy itself learns to act differently under different latents, rather than relying on prompt-only adaptation~\citep{dong2022survey}.

The scheme in Eq.(\ref{eq:policy_training}) naturally extends to implicit-reward objectives such as Direct Preference Optimization (DPO)~\citep{rafailov2023direct} by conditioning the policy and implicit reward surrogate on $\vz_k$. 

Plus, conditioning policy on user-latent is not limited to LLMs: Our swap-guided encoding and adaptive latent conditioning can be used when preferences are difficult to summarize, including in generative models or control settings~\citep{poddar2024personalizing, wang2025personalization, ng2025latent}.

\section{Notations}
\begin{table}[H]
\caption{Notations}
\label{tab:llm_hyperparameter}
\centering

\begin{tabular}{ll ll}
\toprule
\textbf{Notation} & \textbf{Meaning} & \textbf{Notation} & \textbf{Meaning} \\
\midrule

\multicolumn{2}{c}{\textbf{Indices \& counts}} &
\multicolumn{2}{c}{\textbf{Embeddings, latents \& contexts}} \\
\cmidrule(lr){1-2}\cmidrule(lr){3-4}
$i$ & preference-pair index & $\ve_w$ & embedding of $y_w$ \\
$n$ & number of preference-pair in sample & $\ve_l$ & embedding of $y_l$ \\
$N$ & total number of preference-pair & $\vz$ & latent embedding \\
$j$ & dimension index & $\vmu$ & mean \\
$k$ & flow step & $\boldsymbol{\sigma}$ & standard deviation \\
$K$ & total flow step & $\boldsymbol{\ell}$ & log-variance \\
$d$ & latent dimension & $\boldsymbol{\eps}$ & random noise \\
& & $\vc$ & shared context \\
& & $\vc_d$ & swap-reversal context \\
& & $\vc_s$ & swap-invariant context \\

\addlinespace
\multicolumn{2}{c}{\textbf{Users \& sets}} &
\multicolumn{2}{c}{\textbf{Models \& functions}} \\
\cmidrule(lr){1-2}\cmidrule(lr){3-4}
$h$ & a user (annotator) & $q(\cdot)$ & encoder / variational posterior \\
$\sH$ & user population & $r(\cdot)$ & decoder / reward function \\
$\sD$ & full preference dataset & $p(\cdot)$ & preference probability \\
$\sD_h$ & user $h$'s preference dataset & $f(\cdot)$ & autoregressive transform \\
$p$ & a preference type & $\mu_k(\cdot)$ & shift function at step k\\
$\sP$ & set of preference types & $\sigma_k(\cdot)$ & scale function at step k \\

\addlinespace
\multicolumn{2}{c}{\textbf{Prompt \& response}} &
\multicolumn{2}{c}{\textbf{Parameters \& weights}} \\
\cmidrule(lr){1-2}\cmidrule(lr){3-4}
$x$ & prompt & $\psi$ & learnable params (encoder \& flow) \\
$y$ & response & $\phi$ & learnable params (decoder) \\
$y_w$ & chosen (winning) response & $\beta$ & KL-divergence weight \\
$y_l$ & rejected (losing) response & $\lambda$ & guidance loss weight \\
& & $\eta$ & guidance balancing weight \\
\bottomrule
\end{tabular}
\end{table}

\paragraph{Norms}
Throughout, $\|\cdot\|$ denotes the Euclidean norm for vectors and the Frobenius norm for matrices.
We use $\|\cdot\|_\infty$ for the entrywise max norm when needed.

\section*{The use of Large Language Models}
We employed an LLM-assisted search to identify prior work on posterior collapse in VAEs and user-representation policies across domains.
All retrieved items were manually reviewed by the authors to confirm their relevance before citation.

\vspace{0.25cm}

\end{document}